\theoremstyle{plain}
\newtheorem{theorem}{Theorem}[section]
\newtheorem{proposition}[theorem]{Proposition}
\newtheorem{lemma}[theorem]{Lemma}
\newtheorem{corollary}[theorem]{Corollary}
\theoremstyle{definition}
\newtheorem{assumption}[theorem]{Assumption}
\newtheorem{remark}[theorem]{Remark}
\def\rva{{\mathbf{a}}}
\def\rvb{{\mathbf{b}}}
\def\rvv{{\mathbf{v}}}
\def\rvw{{\mathbf{w}}}
\def\rvx{{\mathbf{x}}}
\def\rvz{{\mathbf{z}}}
\icmltitlerunning{On Emergence of Clean-Priority Learning  in Early Stopped Neural Networks}
\begin{document}

\twocolumn[
\icmltitle{On Emergence of Clean-Priority Learning  in Early Stopped Neural Networks}



\icmlsetsymbol{equal}{*}

\begin{icmlauthorlist}
\icmlauthor{Chaoyue Liu}{equal,hdsi}
\icmlauthor{Amirhesam Abedsoltan}{equal,cse}
\icmlauthor{Mikhail Belkin}{hdsi}
\end{icmlauthorlist}

\icmlaffiliation{cse}{Department of Computer Science and Engineering, and}

\icmlaffiliation{hdsi}{Halicioglu Data Science Institute, UC San Diego, USA}

\icmlcorrespondingauthor{Chaoyue Liu, Amirhesam Abedsoltan}{chl212,aabedsoltan@ucsd.edu}

\icmlkeywords{Early stopping, dynamics, label noisy}

\vskip 0.3in
]


\printAffiliationsAndNotice{\icmlEqualContribution} 

\begin{abstract}
When random label noise is added to a training dataset, the prediction error of a neural network on a label-noise-free test dataset initially improves during early training but eventually deteriorates, following a U-shaped dependence on training time. This behaviour is believed to be a result of neural networks learning the pattern of clean data first and fitting the noise later in the training, a phenomenon that we refer to as \textit{clean-priority learning}. In this study, we aim to explore the learning dynamics underlying this phenomenon. We theoretically demonstrate that,  in the early stage of training, the update direction of gradient descent is determined by the clean subset of training data, leaving the noisy subset  has minimal to no impact, resulting in a prioritization of clean learning. Moreover, we show both theoretically and experimentally, as the clean-priority learning goes on, the dominance of the  gradients of clean samples over those of noisy samples  diminishes, and finally results in a termination of the clean-priority learning and fitting of the noisy samples. 
\end{abstract}

\section{Introduction}

Recent studies suggest that Neural Network (NN) models tend to first learn the patterns in the clean data and overfit the noise at a later stage \cite{arpit2017closer,li2020gradient}. We refer to this phenomena as \textit{clean-priority learning}.
Real datasets may have intrinsic label noise, which is why early stopping can be useful in practice, saving a significant amount of unnecessary computation cost. 

To study \textit{clean-priority learning} phenomena, 
we intentionally add label noise to the training data set and leave the test data set untouched, this is a common setting in literature (see, for example, \cite{zhang2021understanding,nakkiran2021deep, belkin2018understand}).
Figure \ref{fig:intro_illu} illustrate this for a MNIST classification task using CNN. 
The test prediction error exhibits a U-shaped dependence on training time, with an initial decrease followed by an increase after the early stopping point. The  observation is that in the intermediate steps, especially around the early stopping point, the test performance can be significantly better than the label noise level added to the training set (below the dashed line).

\begin{figure}[t]
        \centering        
        \includegraphics[width=0.92\columnwidth]
        {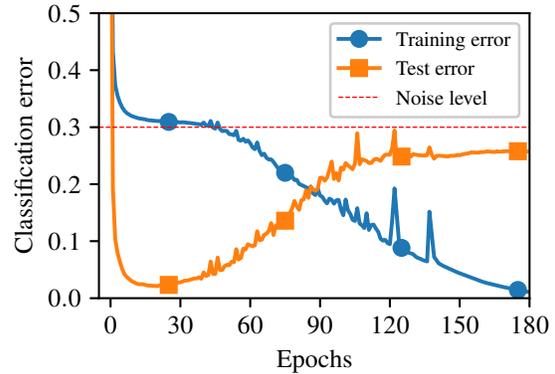}%
\label{fig:intro_illu}
\vspace{-10pt}
        \caption{Classification errors on training and test datasets of MNIST using CNN. Test error exhibits a U-shaped curve, and can be significantly lower than the noise level during training.
        }
\vspace{-10pt}
\end{figure}

To further explore this phenomenon, we address the following fundamental questions:
\begin{enumerate}
    \item {\it What is the underlying mechanism by which neural networks learn the clean data first and fit the noise in later stages?}
    \item {\it How does the model performance deteriorate after the early stopping point ?}
\end{enumerate}

At the outset, we analyze the configuration of sample-wise gradients (or its variant, for multi-class classification) on the training dataset, at the neural network initialization. Our objective is to examine whether there exists any pattern among the gradients that can explain the clean-priority learning phenomenon. Our analysis reveals that, at initialization, samples within the same class (before label corruption), which are presumably more similar to each other, tend to have their sample-wise gradients relatively closer in vector directions (compared to the samples from different classes). The label corruption, which flips the label to a different class, flips the corresponding sample-wise gradient to its opposite direction. Consequently, the sum of the noisy sample gradients is in sharp opposite direction of that of the clean sample gradients.

The key observation is that, due to the dominance of the population of the clean samples, in the early stage of learning, the gradient of noisy subset is cancelled out, and essentially makes no contribution on the gradient descent (GD) update direction\footnote{To be more precise, the only effect of the noisy subset gradient is resulting in a smaller GD step size.}. It is also worth noting that almost all clean sample-wise gradient vectors ``agree'' with the GD update (i.e., have positive projection), while almost all noisy sample-wise gradients are ``against'' the GD update. As a result, the individual loss on each clean sample is decreased, and that on each noisy sample is increased.
Hence, we see that, in the early stage, the GD algorithm is determined by the clean samples and exhibits the clean-priority learning. 
 
We further show that as the clean-priority learning process continues, the clean subset gradient's dominance over the noisy subset gradually diminishes. This is particularly evident around the early stopping point, at which the noisy subset gradient begins to make a meaningful contribution, causing the model to fit the noisy samples along with the clean ones. This new trend in learning behavior is expected to hurt the model's performance, which was previously based primarily on the clean samples.

In summary, we make the following contributions:
\begin{itemize}
    \vspace{-5pt}
    \item {\bf learning dynamics.} In the early stage of learning of neural networks, the noisy samples contribution to the GD update is cancelled out by that of the clean samples, which is the key mechanism underlying  clean-priority learning. 
    However, this clean-priority learning behavior gradually fades as the dominance of the clean subset diminishes, particularly around the early stopping point. We experimentally verify our findings on deep neural networks on various classification problems.
    \item For fully connected networks with mild assumption on data we theoretically prove our empirical observation. 
    \item In addition, we find for neural networks, at initialization, sample-wise gradients from the same class tend to have relatively similar directions, when there is no label noise.
    
    \vspace{-5pt}
\end{itemize}

The paper is organized as follows: in Section \ref{sec:setup}, we describe the setup of the problems and introduce necessary concepts and notations. In Section \ref{sec:intialization}, we analyze the sample-wise gradients at initialization of neural network, for binary classification. In Section \ref{sec:binary_dynamics}, we show the learning dynamics, especially the clean-priority learning, on binary classification. In Section \ref{sec:multi_class}, we extend our study and findings to multi-class classification problems.

\subsection{Related works}

Early stopping is often considered as a regularization technique and is widely used in practice to obtain good performance for machine learning models \cite{zhang2017sensitivity,gal2016theoretically,graves2013speech}.
Early stopping also received a lot theoretical analyses, both on non-neural network models, especially linear regression and kernel regression \cite{yao2007early, ali2019continuous,xu2022relaxing,shen2022optimal}, and on neural networks \cite{zhang2021understanding,ji2021early}.

Recent studies suggest that, when random label noise presents, neural networks fit the clean data first and ``overfit'' the noise later on \cite{arpit2017closer, li2020gradient, bai2021understanding}. For example, based on experimental observations that maximum validation set accuracy is achieved before good training set accuracy, the work \cite{arpit2017closer} conjectures that the neural network  learns clean patterns first. However, there is no explanation on why and how the clean-priority learning happens. In another work \cite{li2020gradient}, assuming (almost) perfectly cluster-able data and uniform conditioning on Jacobian matrices, proves that clean data are fit by two-layer neural networks in an early stage. However, this data assumption requires, at the same location of each noisy sample, there must exist several (at least $1/\delta$, with $\delta$ being the label noise level) clean samples. This assumption is often not met by actual datasets. 

To the best of our knowledge, our work is the first to elucidate the mechanism underlying the clean-priority phenomenon, offering a fresh perspective on its dynamics.

\section{ Problem setup and preliminary }
\label{sec:setup}
In this paper, we consider supervised classification problems. 

\paragraph{Datasets.} 
There is a training dataset $\mathcal{D} \triangleq \{(\rvx_i, y_i)\}_{i=1}^n$ of size $|\mathcal{D}|=n$.
In each sample $(\rvx_i, y_i)$, there are input features $\rvx_i\in\mathbb{R}^d$ and a label $y_i$. For binary ($2$-class) classification problems, the label $y_i \in \{0,1\}$ is binary; for multi-class classification problems, the label is one-hot encoded, $y_i\in\mathbb{R}^{C}$, where $C$ is the total number of classes. We further denote $\mathcal{D}^{(c)}$, $c\in \{1,2, \cdots, C\}$, as the subset of $\mathcal{D}$ that is composed of samples from the $c$-th class.  It is easy to see, $\mathcal{D} = \cup_{c=1}^C \mathcal{D}^{(c)}$.

We assume the labels in  $\mathcal{D}$ are randomly corrupted. Specifically, if denote $\hat{y}$ as the ground truth label of $(\rvx_i, y_i)\in\mathcal{D}$, there exists a non-empty set
\begin{equation}
\mathcal{D}_{noise} \triangleq \{(\rvx_i,y_i)\in\mathcal{D} : y_i \ne \hat{y}_i \}.
\end{equation}
Furthermore, the labels $y_i$ in $\mathcal{D}_{noise}$ is uniformly randomly distributed across all the class labels except $\hat{y}_i$. We call  $\mathcal{D}_{noise}$ as the \emph{noisy subset} and its elements as \emph{noisy samples}. We also define the \emph{clean subset} $\mathcal{D}_{clean}$ as the compliment, i.e., $\mathcal{D}_{clean} = \mathcal{D}\backslash \mathcal{D}_{noise}$, and call its elements as {\it clean samples}.
The {\it noise level} $\delta$ is defined as the ratio  $|\mathcal{D}_{noise}|/|\mathcal{D}|$. In this paper, we set $\delta < 0.5$, i.e., the majority of training samples are not corrupted. 
 We also denote $\hat{\mathcal{D}}$ as the ground-truth-labeled dataset: $\hat{\mathcal{D}}\triangleq \{(\rvx_i, \hat{y}_i)\}_{i=1}^n$.

In addition, there is a test dataset $\bar{\mathcal{D}}$ which is i.i.d. drawn from the same data distribution as the training set $\mathcal{D}$, except that the labels of test set $\bar{\mathcal{D}}$ are not corrupted.

\paragraph{Optimization.} 
Given an arbitrary dataset $\mathcal{S}$ and a model $f$ which is parameterized by $\rvw$ and takes an input $\rvx$, we define the loss function as
\begin{equation}
L(\rvw; \mathcal{S}) = \frac{1}{|\mathcal{S}|}\sum_{(\rvx,y)\in\mathcal{S}}l(\rvw; \rvx_i, y_i),
\end{equation}
with $l(\rvw; \rvx_i, y_i)\triangleq l(f(\rvw;\rvx),y)$ is evaluated on a single sample and is a function of the model output $f(\rvw;\rvx)$ and label $y$. We use the logistic loss for binary classification problems, and  the cross entropy loss for multi-class problems. In this paper, we  use neural networks as the model. In case of binary classification problems, the output layer of $f$ has only one neuron, and there exists a {\it sigmoid} function on top such that the output $f(\rvw;\rvx) \in (0,1)$. In case of multi-class classification problems, the neural network $f$ has $C$ output neurons, and there exists a {\it softmax} function to normalize the outputs. Throughout the paper, we assume that the neural network is large enough so that the training data can be exactly fit. This is usually satisfied when the neural network is over-parameterized \cite{liu2022loss}.

The optimization goal is to minimize the  empirical loss function (i.e., loss function on the training dataset $\mathcal{D}$):
\begin{equation}
 L(\rvw)  \triangleq L(\rvw;\mathcal{D}) = 
 \frac{1}{n}\sum_{i=1}^n l(\rvw; \rvx_i, y_i).
\end{equation}
The above loss function is usually optimized by gradient descent (or its stochastic variants)
 which has the following update form:
\begin{align}
  \rvw_{t+1} & = \rvw_{t} - \eta \nabla L(\rvw_t;\mathcal{D}) \label{eq:gd_update_rule} \\
  & = \rvw_{t} - \eta \frac{1}{n}\sum_{(\rvx_i,y_i)\in \mathcal{D}}\nabla l(\rvw_t; \rvx_i, y_i). \nonumber
\end{align}
Here, $\nabla l(\rvw; \rvx_i, y_i)$ is the gradient of loss $l(\rvw; \rvx_i, y_i)$ w.r.t. the neural network parameters $\rvw$. 

\paragraph{Sample-wise gradients.}
We call $\nabla l(\rvw; \rvx_i, y_i)$  {\it sample-wise gradient}, as it is evaluated on a single  sample, and denote it as $\nabla l_i(\rvw)$ for short. As there are $n$  samples in $\mathcal{D}$, at each point $\rvw$ in the parameter space, we have $n$ sample-wise gradients, each of which is of dimension $p$. 

Denote $h(\rvw;\rvx)$ as the pre-activation output neuron(s) before activation, which is in $\mathbb{R}$ for binary classification, and is in $\mathbb{R}^C$ for multi-class classification. 
The sample-wise gradient for a given sample $(\rvx_i,y_i)$ has the following form \cite{bishop2006pattern}:
\begin{equation}\label{eq:sample_gradient_expression}
\nabla l_i(\rvw) = (f(\rvw;\rvx_i)-y_i) \nabla h(\rvw;\rvx_i).
\end{equation}
Note that the above expression is a scalar-vector multiplication for binary classification, and is a vector-matrix multiplication for multi-class classification.

We note that the sample-wise gradient will be one of our major quantities, and play a fundamental role in the analysis through out this paper. 

\section{Sample-wise gradients at initialization for binary classification}
\label{sec:intialization}
In this section, we analyze the configuration of the $n$ sample-wise gradients at randomly initialization of  neural networks for binary classification.

We start with the expression Eq.(\ref{eq:sample_gradient_expression}) of the sample-wise gradient $\nabla l_i(\rvw)$. For binary classification, $\nabla l_i(\rvw)$ is proportional to the model derivative $\nabla h(\rvw;\rvx_i)$, up to a scalar factor $f(\rvw;\rvx_i)-y_i$. In Section \ref{sec:model_derivative}, we focus on the directions of $\nabla h(\rvw;\rvx_i)$, which is label independent. Then, in Section \ref{sec:sample_gradient_init}, we discuss the effects of the label-related factor $f(\rvw;\rvx_i)-y_i$ and label noise   on the direction of sample-wise gradients.

\subsection{Direction of the model derivative $\nabla h$}\label{sec:model_derivative}
Given any two inputs $\rvx,\rvz\in\mathbb{R}^d$, we denote the angle between $\rvx$ and $\rvz$ in the data space by $\theta_d(\rvx,\rvz)$, and denote the angle between the two vectors $\nabla h(\rvw_0;\rvx)$ and $\nabla h(\rvw_0;\rvz)$ by $\theta_h(\rvx,\rvz)$. In this subsection, we are concerned with the connection between $\theta_h(\rvx,\rvz)$ and $\theta_d(\rvx,\rvz)$ at the network initialization.

Consider the following two types of neural networks: a two-layer linear network $h_1$, which is defined as
\begin{equation}\label{eq:2_layer_nn_linear}
   h_1(\rvw; \rvx) = \frac{1}{\sqrt{m}}\rvv^T A \rvx,
\end{equation}
and a two-layer ReLU network $h_2$, which is defined as
\begin{equation}\label{eq:2_layer_nn_relu}
   h_2(\rvw; \rvx) = \frac{1}{\sqrt{m}}\rvv^T \sigma(A \rvx).
\end{equation}
Here, $\sigma(\cdot)=\max(\cdot,0)$ is the element-wise ReLU activation function,  $\rvv\in\mathbb{R}^{m}$ and $A\in\mathbb{R}^{m\times d}$ are the first layer and second layer parameters, respectively. We denoted $\rvw$ as the collection of all the parameters. We use the NTK parameterization \cite{jacot2018neural} for the networks; namely, each parameter is i.i.d. initialized using the normal distribution $\mathcal{N}(0,1)$ and there exist a scaling factor $1/\sqrt{m}$ explicitly on each hidden layer. 

\begin{figure}[t]
        \centering        
        \includegraphics[width=0.8\columnwidth]
        {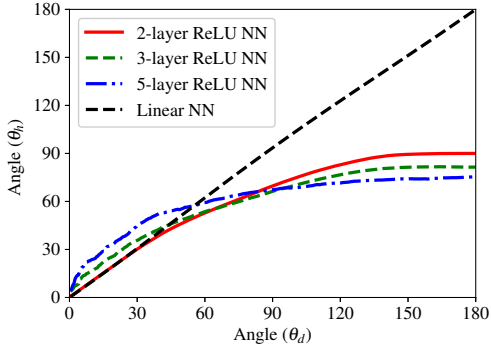}%
        \vspace{-10pt}
        \caption{Relation between $\theta_h$ and $\theta_d$. For both shallow and deep neural networks, similar inputs (small angle $\theta_d$) induce similar model derivatives (small angle $\theta_h$).
        }
        \vspace{-10pt}
    \label{fig:angle_curves}
\end{figure}
The following theorem and its corollary show the relation between $\theta_h(\rvx,\rvz)$ and $\theta_d(\rvx,\rvz)$. (Proofs in Appendix \ref{sec:proof_thm}).
\begin{theorem}\label{thm:angle}
    Consider the two-layer neural networks, $h_1$ and $h_2$, defined in Eq.(\ref{eq:2_layer_nn_linear}) and (\ref{eq:2_layer_nn_relu}), with infinite width $m$. Given any two inputs $\rvx$ and $\rvz$, the angles $\theta_{h_1}(\rvx,\rvz)$ and $\theta_{h_2}(\rvx,\rvz)$ have  the following relations with $\theta_d(\rvx,\rvz)$, at network initialization $\rvw_0$:
    for the linear neural network $h_1$,
    \begin{equation*}
        \theta_{h_1}(\rvx,\rvz)  = \theta_d(\rvx,\rvz);
    \end{equation*}
    for the ReLU neural network $h_2$,
    \begin{equation*}
        \cos\theta_{h_1}(\rvx,\rvz)  = \frac{\pi-\theta_d(\rvx,\rvz)}{\pi}\cos\theta_d(\rvx,\rvz)+\frac{1}{2\pi}\sin \theta_d(\rvx,\rvz).
    \end{equation*}
\end{theorem}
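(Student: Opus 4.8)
The plan is to compute the covariance structure of the random vectors $\nabla h(\rvw_0;\rvx)$ and $\nabla h(\rvw_0;\rvz)$ in the infinite-width limit and read off the angle between them. The key observation is that in the NTK parameterization the inner product $\langle \nabla h(\rvw_0;\rvx), \nabla h(\rvw_0;\rvz)\rangle$ concentrates, as $m\to\infty$, to the Neural Tangent Kernel $\Theta(\rvx,\rvz)$, so $\cos\theta_h(\rvx,\rvz) = \Theta(\rvx,\rvz)/\sqrt{\Theta(\rvx,\rvx)\Theta(\rvz,\rvz)}$. Thus the whole theorem reduces to evaluating the NTK for the two architectures and simplifying. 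I would organize the proof as two short lemmas, one per network.

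For the linear network $h_1(\rvw;\rvx)=\frac{1}{\sqrt m}\rvv^T A\rvx$, I would split the gradient into its $\rvv$-block and its $A$-block. The $\rvv$-derivative is $\frac{1}{\sqrt m}A\rvx$, and the $A$-derivative is $\frac{1}{\sqrt m}\rvv\rvx^T$. Taking expectations over the Gaussian initialization: the $\rvv$-block contributes $\frac{1}{m}\rvx^T \mathbb{E}[A^TA]\rvz = \frac{1}{m}\rvx^T(m I)\rvz = \rvx^T\rvz$ (since each column of $A$ has covariance $I$), and the $A$-block contributes $\frac{1}{m}(\rvx^T\rvz)\,\mathbb{E}[\|\rvv\|^2] = \frac{1}{m}(\rvx^T\rvz)(m) = \rvx^T\rvz$. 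So $\Theta_{h_1}(\rvx,\rvz) = 2\,\rvx^T\rvz$ (the exact constant is irrelevant for the angle), hence $\cos\theta_{h_1}(\rvx,\rvz) = \frac{\rvx^T\rvz}{\|\rvx\|\|\rvz\|} = \cos\theta_d(\rvx,\rvz)$, giving $\theta_{h_1}=\theta_d$. Concentration as $m\to\infty$ follows from a law-of-large-numbers argument over the $m$ i.i.d. hidden units.

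For the ReLU network $h_2(\rvw;\rvx)=\frac{1}{\sqrt m}\rvv^T\sigma(A\rvx)$, the same decomposition applies. Writing $\rva_j$ for the $j$-th row of $A$, the $A$-block of the gradient has $j$-th row $\frac{1}{\sqrt m}v_j\,\sigma'(\rva_j^T\rvx)\rvx^T = \frac{1}{\sqrt m}v_j\,\mathbb{1}[\rva_j^T\rvx>0]\rvx^T$, and the $\rvv$-block is $\frac{1}{\sqrt m}\sigma(A\rvx)$. The relevant Gaussian integrals are the standard arc-cosine-kernel computations: $\mathbb{E}_{\rva\sim\mathcal{N}(0,I)}[\mathbb{1}[\rva^T\rvx>0]\mathbb{1}[\rva^T\rvz>0]] = \frac{\pi-\theta_d}{2\pi}$ and $\mathbb{E}[\sigma(\rva^T\rvx)\sigma(\rva^T\rvz)] = \frac{\|\rvx\|\|\rvz\|}{2\pi}\big((\pi-\theta_d)\cos\theta_d + \sin\theta_d\big)$ where $\theta_d=\theta_d(\rvx,\rvz)$. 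Combining the two blocks and normalizing (using $\theta_d(\rvx,\rvx)=0$ in the denominator) yields exactly the stated formula $\cos\theta_{h_2}(\rvx,\rvz) = \frac{\pi-\theta_d}{\pi}\cos\theta_d + \frac{1}{2\pi}\sin\theta_d$; note the denominator term $\|\rvx\|\|\rvz\|$ cancels the one hidden inside the numerator integrals.

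The main obstacle is the ReLU Gaussian integral for $\mathbb{E}[\sigma(\rva^T\rvx)\sigma(\rva^T\rvz)]$: it requires projecting $\rva$ onto the two-dimensional span of $\{\rvx,\rvz\}$, reducing to a bivariate Gaussian, and carefully evaluating the resulting angular integral — this is where all the $\pi$'s and the $\sin\theta_d$ term come from. The rest (the linear case, the indicator integral, and the concentration/law-of-large-numbers step justifying the infinite-width replacement of the empirical average over hidden units by its expectation) is routine. I would present the arc-cosine kernel integrals as cited facts (e.g., from \cite{jacot2018neural} or the arc-cosine kernel literature) rather than re-deriving them in full, and spend the main effort on assembling the two gradient blocks correctly and checking the normalization.
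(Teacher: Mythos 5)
Your proposal is correct and follows essentially the same route as the paper: split the gradient into the $\rvv$-block and $A$-block, compute the inner product of the two gradients, pass to the infinite-width limit by a law-of-large-numbers argument, and evaluate the resulting Gaussian expectations (the indicator probability $\frac{\pi-\theta_d}{2\pi}$ and the arc-cosine-kernel integral) before normalizing with $\theta_d(\rvx,\rvx)=0$. The only difference is presentational: the paper derives the ReLU integral explicitly by reducing to the two-dimensional span of $\{\rvx,\rvz\}$ and computing the angular integral, whereas you cite it as a known arc-cosine-kernel fact, which is a legitimate shortcut.
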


\begin{corollary}\label{corollary:angle}
    Consider the same networks $h_1$ and $h_2$ as in Theorem \ref{thm:angle}. For both networks, the following holds:
    for any inputs $\rvx$, $\rvz$ and $\rvz'$, if $0 \le \theta_d(\rvx,\rvz) \le \theta_d(\rvx,\rvz') \le \frac{\pi}{2}$, then $0 \le\theta_{h_i}(\rvx,\rvz) \le \theta_{h_i}(\rvx,\rvz')\le \frac{\pi}{2}$, for $i \in \{1,2\}$.
\end{corollary}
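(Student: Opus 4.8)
The plan is to derive Corollary~\ref{corollary:angle} directly from Theorem~\ref{thm:angle} by checking that in each of the two cases the map sending $\theta_d$ to $\theta_{h_i}$ is monotone nondecreasing on $[0,\pi/2]$ and maps this interval into itself. Once monotonicity on $[0,\pi/2]$ is established, the inequality $0 \le \theta_d(\rvx,\rvz) \le \theta_d(\rvx,\rvz') \le \pi/2$ immediately transfers to $\theta_{h_i}$; the containment in $[0,\pi/2]$ will follow by evaluating the map at the endpoints $0$ and $\pi/2$ and using continuity together with monotonicity.

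For the linear network $h_1$ there is nothing to do beyond citing $\theta_{h_1}(\rvx,\rvz) = \theta_d(\rvx,\rvz)$, which is the identity map and hence trivially order-preserving and interval-preserving. For the ReLU network $h_2$, the relation is $\cos\theta_{h_2} = g(\theta_d)$ where
\begin{equation*}
g(\theta) = \frac{\pi - \theta}{\pi}\cos\theta + \frac{1}{2\pi}\sin\theta.
\end{equation*}
Since $\theta \mapsto \arccos(\cdot)$ is decreasing, it suffices to show $g$ is nonincreasing on $[0,\pi/2]$ and that $g([0,\pi/2]) \subseteq [0,1]$ (so that $\arccos$ is well-defined and lands in $[0,\pi/2]$). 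I would compute $g'(\theta) = -\frac{1}{\pi}\cos\theta - \frac{\pi-\theta}{\pi}\sin\theta + \frac{1}{2\pi}\cos\theta = -\frac{1}{2\pi}\cos\theta - \frac{\pi-\theta}{\pi}\sin\theta$, and observe that on $[0,\pi/2]$ both $\cos\theta \ge 0$ and $(\pi-\theta)\sin\theta \ge 0$, so $g'(\theta) \le 0$ throughout, giving monotonicity. For the range, $g(0) = 1$ and $g(\pi/2) = \frac{1}{2\pi} \cdot 1 = \frac{1}{2\pi} > 0$, so by monotonicity $g([0,\pi/2]) = [\tfrac{1}{2\pi}, 1] \subseteq [0,1]$, as needed.

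Putting the pieces together: given $0 \le \theta_d(\rvx,\rvz) \le \theta_d(\rvx,\rvz') \le \pi/2$, apply $g$ (which reverses the order) to get $g(\theta_d(\rvx,\rvz)) \ge g(\theta_d(\rvx,\rvz'))$, i.e. $\cos\theta_{h_2}(\rvx,\rvz) \ge \cos\theta_{h_2}(\rvx,\rvz')$; since both $\theta_{h_2}$ values lie in $[0,\pi/2]$ where $\arccos$ is decreasing, this yields $\theta_{h_2}(\rvx,\rvz) \le \theta_{h_2}(\rvx,\rvz')$, and both are in $[0,\pi/2]$ by the range computation. The analogous (trivial) argument handles $h_1$. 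I do not anticipate a genuine obstacle here; the only mild care needed is confirming the sign of $g'$ on the whole interval and that the image stays within $[0,1]$ so that the angles $\theta_{h_i}$ are themselves well-defined and confined to $[0,\pi/2]$ — everything else is bookkeeping.
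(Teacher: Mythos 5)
Your proposal is correct and follows essentially the same route as the paper's own proof: the linear case is handled by the identity $\theta_{h_1}=\theta_d$, and the ReLU case by differentiating the right-hand side of the relation from Theorem~\ref{thm:angle} to get $-\frac{1}{2\pi}\cos\theta_d-\frac{\pi-\theta_d}{\pi}\sin\theta_d\le 0$ on $[0,\pi/2]$ and observing that the expression stays positive there. Your extra check that the image lies in $[\tfrac{1}{2\pi},1]\subseteq[0,1]$ is a slightly more careful justification that $\arccos$ lands in $[0,\pi/2]$ than the paper's remark, but it is the same argument.
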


The theorem and corollary suggest that: {\it similar inputs (small angle $\theta_d$) induce similar model derivatives (small angle $\theta_h$)}.

\begin{remark}[Not just at initialization]
As discussed in \cite{liu2020linearity,liu2022transition}, the model derivative $\nabla h(\rvw;\rvx)$ is constant  during optimization for infinitely wide neural networks. Hence, the angle $\theta_h$ between model derivatives is also  constant, and Theorem \ref{thm:angle} and Corollary \ref{corollary:angle} apply to any time stamp of the network training.
\end{remark}

\paragraph{Experimental verification.} We experimentally verify the above theoretical results on neural networks with large width. Specifically, we consider six neural networks: three linear networks with $2$, $3$ and $5$ layers, respectively; and  three ReLU networks with $2$, $3$ and $5$ layers. Each hidden layer of each neural network has  $512$ neurons. For each network, we compute the model derivatives $\nabla h$ on the $1$-sphere $\mathcal{S}^1 =\{(\cos \theta_d, \sin \theta_d): \theta_d \in [0, 2\pi)\}$, at the network initialization. Figure \ref{fig:angle_curves} shows the relations between the angle $\theta_h$ and $\theta_d$.\footnote{The curves for the three linear networks are almost identical and not visually distinguishable, we only present the one for $2$-layer linear network in Figure \ref{fig:angle_curves}.}  We observe that the curves for the $2$-layer networks match Theorem \ref{thm:angle}. More importantly, the experiments suggest that the same or similar relations, as well as Corollary \ref{corollary:angle}, still  hold for deep neural networks, although our analysis is conducted on shallow networks. 

Consider the following synthetic dataset (also shown in the left panel of Figure \ref{fig:raw_features_angle}): two separated data clusters in a $2$-dimensional space. We use a $3$-layer ReLU network of width $m=512$ at its initialization to compute the sample-wise model derivatives $\nabla h$.  The right panel of Figure \ref{fig:raw_features_angle} shows the  distributions of angle $\theta_h$ for data pairs from the same cluster (``within'') and from different clusters (``between''). It can be easily seen that the ``within'' distribution has smaller angles $\theta_h$ than the ``between'' distribution, which is expected as the data from the same clusters are more similar.

\begin{figure}[t]
        \centering        
        \hspace*{-0.25cm}
        \includegraphics[width=1.05\columnwidth]
        {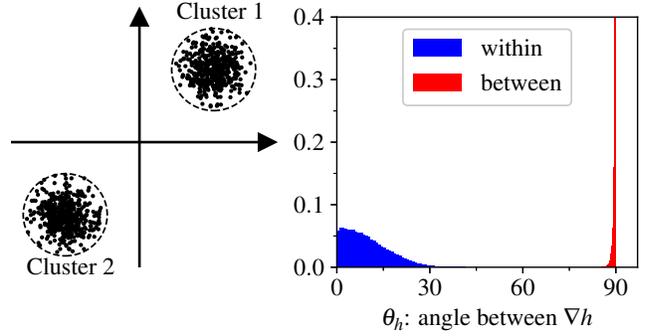}%
        \vspace{-25pt}
        \caption{({\bf Left}) Data visualization: two separated data clusters in $2$-dimensional space. ({\bf Right}) Distributions of angle $\theta_h$ for sample pairs from the same cluster (``within'') and from different clusters (``between'').
        }
        \vspace{-10pt}
    \label{fig:raw_features_angle}
\end{figure}

\subsection{Directions of the sample-wise gradients}\label{sec:sample_gradient_init}
Now, we consider the sample-wise gradients $\nabla l(\rvw_0)$ at initialization, using Eq.(\ref{eq:sample_gradient_expression}). We note that the direction of $\nabla l(\rvw_0)$ is determined by $\nabla h(\rvw_0;\rvx)$ and the label $y$. This is because only the sign (not the magnitude) of $y-f(\rvw_0;\rvx)$ may affect the direction, and the post-activation output $f(\rvw_0;\rvx)$ is always in $(0,1)$ and label $y$ is either $0$ or $1$.

Motivated by this observation, for a fixed class $c\in \{0,1\}$, we consider the following subsets:  clean subset $\mathcal{D}_{clean}^{(c)} \triangleq {\mathcal{D}^{(c)}\cap\mathcal{D}_{clean}}$, 
 noisy subset $\mathcal{D}_{noise}^{(c)}\triangleq{\mathcal{D}^{(c)}\cap\mathcal{D}_{noise}}$, and $\mathcal{D}_{other}^{(c)}\triangleq{\mathcal{D}_{clean}\backslash\mathcal{D}^{(c)}}$. We note that $\mathcal{D}_{noise}^{(c)}$ and $\mathcal{D}_{clean}^{(c)}$ have the same input distribution but different labels $y$, while $\mathcal{D}_{noise}^{(c)}$ and $\mathcal{D}_{other}^{(c)}$ have the same labels $y$ but different input distributions.

For these subsets, we denote their corresponding sets of sample-wise gradients as  $\mathcal{G}_{clean}^{(c)}(\rvw)$, $\mathcal{G}^{(c)}_{noise}(\rvw)$ and $\mathcal{G}^{(c)}_{other}(\rvw)$, respectively. We also define the corresponding {\it subset gradients} as the sum of all sample-wise gradients in the subset: $g_{k}^{(c)}(\rvw) \triangleq \sum_{\nabla l(\rvw)\in\mathcal{G}_{k}^{(c)}}\nabla l(\rvw)$, for $k\in \{clean, ~ noise, ~ other\}$. the direction of $g_{k}^{(c)}(\rvw)$ is the same as that of  the average  gradient in the  subset.

{\bf Direction of sample-wise gradients.} We consider the angles  between sample-wise gradients, and denote by $\theta_g$.

First we consider the clean subset. Within $\mathcal{D}_{clean}^{(c)}$, the factor $f(\rvw;\rvx)-y$ always have the same sign, as the label $y$ is the same. Thus, the directional distribution of $\mathcal{G}_{clean}^{(c)}(\rvw_0)$ is identical to the $\nabla h$ distribution, which has been analyzed in Section \ref{sec:model_derivative}. Presumably, inputs from the same ground truth class tend to be more similar (with small angles $\theta_d$), compared to others. Applying Corollary \ref{corollary:angle}, we expect the angles $\theta_h$, and therefore $\theta_g$ also, within this subset are relatively small.
 The green plots of Figure \ref{fig:histogram_at-initialization} show numerical verification of the $\theta_g$ distributions for the clean subset $\mathcal{D}_{clean}^{(c)}$.\footnote{For illustration purpose, we compare each sample-wise gradient with the average direction, represented by $g^{(c)}_{clean}(\rvw_0)$.} We see that $\theta_g$ tends to concentrate around relatively small angles.

The interesting part is about the noisy subset $\mathcal{D}_{noise}$. This subset shares the same input distribution, hence $\nabla h$ distribution as well, with the clean subset. However, due to different label $y$, the factor $f(\rvw_0;\rvx)-y$ has different sign from that of clean samples, which flips all $\nabla l(\rvw_0)\in\mathcal{G}_{noise}^{(c)}(\rvw_0)$ to the opposite direction of those in $\mathcal{G}_{noise}^{(c)}(\rvw_0)$. As a consequence, sample-wise gradients between $\mathcal{G}_{noise}^{(c)}(\rvw_0)$ and $\mathcal{G}_{clean}^{(c)}(\rvw_0)$
makes large angles $\theta_g$; and the noisy subset gradient $g^{(c)}_{noise}(\rvw_0)$ is sharply opposite to $g^{(c)}_{clean}(\rvw_0)$.
Figure \ref{fig:histogram_at-initialization} experimentally verifies this phenomenon. The red histograms, representing the   $\theta_g$ distribution for noisy subset,   are symmetric to the green ones and locate at large angles. The red  dash lines representing the angle $\theta_g$ between  $g^{(c)}_{noise}(\rvw_0)$ and $g^{(c)}_{clean}(\rvw_0)$ is almost close to $180^{\circ}$. 

Lastly,  the subset $\mathcal{D}_{other}^{(c)}$, having different ground truth labels with the other two subsets, has different input distributions. By Corollary \ref{corollary:angle}, the sample-wise gradients $\nabla l(\rvw_0)$ of this subset are expected to be not align with those of the other two subsets (as shown by the blue histograms in Figure \ref{fig:histogram_at-initialization}). Moreover, the subset gradient $g^{(c)}_{other}(\rvw_0)$ should have a significant  component orthogonal to $g^{(c)}_{noise}(\rvw_0)$ and $g^{(c)}_{clean}(\rvw_0)$ (as shown by the blue dash lines in Figure \ref{fig:histogram_at-initialization}).

{\bf Magnitudes of subset gradients.} We are interested in the magnitudes of $g_{clean}^{(c)}(\rvw_0)$ and $g_{noise}^{(c)}(\rvw_0)$. By definition, for $k \in \{clean, ~ noise\}$, 
\begin{equation*}
g_{k}^{(c)}(\rvw_0) = |D^{(c)}_k| \mathbb{E}[\nabla l] = |D^{(c)}_k| \mathbb{E}[f(\rvw_0;\rvx)-y]\mathbb{E}[\nabla h]
\end{equation*}
where the expectation is taken over the corresponding data subset. We know that $\mathbb{E}[\nabla h]$ is the same for clean and noisy subsets. In addition, $\mathbb{E}[f(\rvw_0;\rvx)-y]$ are opposite for these two subsets, as $\mathbb{E}[f(\rvw_0;\rvx)]$ is $0.5$ by random guess and $y=1$ for one subset and $y=0$ for the other. Hence, we see that the magnitudes $\|g_{k}^{(c)}\|$ are determined by the subset population, and we have 
\begin{equation}\label{eq:init_magnitude_ratio}
\|g_{clean}^{(c)}(\rvw_0)\|/ \|g_{noise}^{(c)}(\rvw_0)\| = (1-\delta)/\delta > 1.
\end{equation}

\begin{figure}[t]
        \centering        
        \hspace{-0.4cm}
        \vspace{-10pt}
        \includegraphics[width=1.04\columnwidth]
        {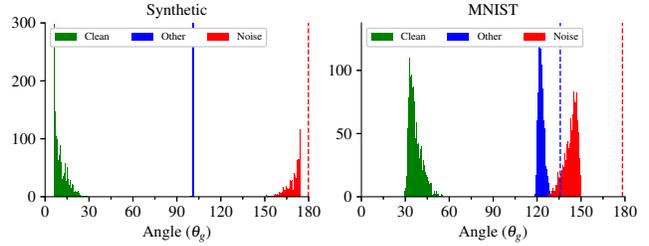}%

        \caption{ The distributions of $\theta_g$. {\bf Left}: synthetic data in Figure \ref{fig:raw_features_angle} ($\delta=0.3$), {\bf Right}: two classes MNIST ((``0'' and 1'', $\delta=0.3$). 
        Dash lines represent subset gradients. 
        Both cases use a 2-layer ReLU neural network.
        }
\vspace{-10pt}
    \label{fig:histogram_at-initialization}
\end{figure}

\section{Learning dynamics of binary classification}
\label{sec:binary_dynamics}

In this section, we analyze the learning dynamics of  binary classification  with label noise  in the training dataset. 

Specifically, we show that in the early stage of training, the dynamics exhibits a clean-priority learning characteristic, due to a dominance of the clean subset in first-order information, i.e., sample-wise gradients. We further show that in later stage of training, this dominance fades away and clean-priority learning terminates, resulting in a fitting of the noisy samples and worsening of the test performance.

We partition the optimization procedure into two stages: early stage which happens before the early stopping point; and later stage which is after the early stopping point.

\begin{figure*}[th]
        \centering   
        \hspace*{-0.3cm}
        \includegraphics[width=0.93\textwidth]
        {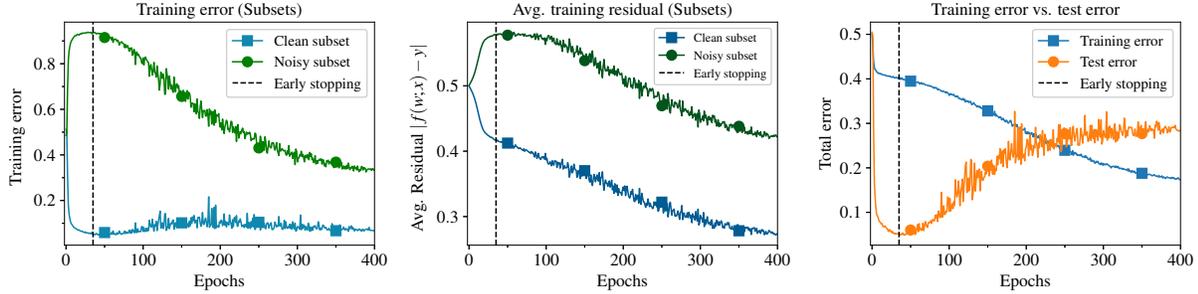}%
        \vspace{-10pt}
        \caption{Learning dynamics on two classes (``7'' and ``9'') of MNIST (noise level $\delta=0.4$) with FCN. {\bf Left}: in the early stage (before the vertical dash line), clean subset error decreases, while  noisy subset error increases. {\bf Middle}: In the early stage, the clean subset average residual $\mathbb{E}_{(\rvx,y)\in \mathcal{D}_{clean}}[|f(\rvw;\rvx)-y|]$ decreases, i.e., on average the network outputs of clean subset move towards the labels, indicating a ``learning'' on the clean subset. One the other hand, the noisy subset average residual, $\mathbb{E}_{(\rvx,y)\in \mathcal{D}_{noise}}[|f(\rvw;\rvx)-y|]$, monotonically increases, indicating that the noisy subset is not-learned. {\bf Right}: total test error and total training error.
        }
\vspace{-5pt}
    \label{fig:mnist2_3plots}
\end{figure*}

\subsection{Initialization \& early stage} 
In Section \ref{sec:intialization}, we have seen that, at initialization,
\begin{equation}
    g_{noise}^{(c)}(\rvw_0) = - \alpha_0 g_{clean}^{(c)}(\rvw_0), 
\end{equation}
with $\alpha_0 \triangleq \delta/(1-\delta) \in (0,1)$. We note that, during training, the model derivative $\nabla h$ for a wide neural network is found to  barely change \cite{liu2020linearity}:
\begin{equation*}
    \nabla h(\rvw_t) = \nabla h(\rvw_0), ~ \forall t>0.
\end{equation*}
By Eq.(\ref{eq:sample_gradient_expression}), this implies that  each sample-wise gradient $\nabla l_i$ keeps its direction unchanged during training (but changes in magnitude through the factor $f(\rvw;\rvx)-y$). Therefore, we make the following assumption:
\begin{assumption}\label{assum:direction_segment}
There exist a time $T>0$ and a sequence $\{\alpha_t\}_{t=0}^T$, with each $\alpha_t \in (0,1)$, such that, for all $t \in [0,T]$ and $c\in \{0,1\}$, the following holds $g_{noise}^{(c)}(\rvw_t) = - \alpha_t g_{clean}^{(c)}(\rvw_t)$.
\end{assumption}

Define $\hat{g}^{(c)}(\rvw)$ as the summation of the sample-wise gradients with ground truth labels, i.e., $\hat{g}^{(c)}(\rvw)=\sum_{(\rvx,\hat{y})\in\hat{\mathcal{D}}^{(c)}} \nabla l(\rvw; \rvx, \hat{y})$.
By the assumption, we have for all $0\le t \le T$ and $c\in\{0,1\}$,
\begin{align}\label{eq:g_corr_epsilon}
 & g_{clean}^{(c)}(\rvw_t) = \frac{1}{\alpha_t+1}\hat{g}^{(c)}(\rvw_t).
\end{align}

On the other hand, by definition, we have for  full gradient
\begin{equation}\label{eq:full_grad_g_corr}
 \nabla L(\rvw_t; \mathcal{D}) = \sum_{c} \left(g_{clean}^{(c)}(\rvw_t)+ g_{noise}^{(c)}(\rvw_t)\right).
\end{equation}
Combining Assumption \ref{assum:direction_segment} and Eqs.(\ref{eq:g_corr_epsilon}) and (\ref{eq:full_grad_g_corr}), we get that the full gradient on the training data $\mathcal{D}$ has the same direction with that on the ground-truth-labeled data $\hat{\mathcal{D}}$. Hence, we have the following proposition.
\begin{proposition}[Update rules] \label{thm:update_rule}
Suppose Assumption \ref{assum:direction_segment} holds with time $T>0$ and sequence $\{\alpha_t\}_{t=0}^T \in (0,1)^T$. Then, the gradient descent (with learning rate $\eta$), Eq.(\ref{eq:gd_update_rule}), has the following equivalent update rule
\begin{equation}\label{eq:thm_update_rule}
\rvw_{t+1} = \rvw_t - \eta'_t \nabla L(\rvw_t; \hat{\mathcal{D}}), ~~ \mathrm{for} ~ t\le T,
\end{equation}
with $\eta'_t = \frac{1-\alpha_t}{1+\alpha_t}\eta > 0$ and $\nabla L(\rvw_t; \hat{\mathcal{D}})$ being the gradient evaluated on the ground-truth-labeled dataset $\hat{\mathcal{D}}$.
\end{proposition}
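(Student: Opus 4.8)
The plan is to trace through the definitions and the algebraic consequences of Assumption~\ref{assum:direction_segment} to rewrite the full training gradient as a positive scalar multiple of the gradient on the ground-truth-labeled dataset. First, I would start from Eq.~(\ref{eq:full_grad_g_corr}), which decomposes $\nabla L(\rvw_t;\mathcal{D})$ into the sum over classes $c\in\{0,1\}$ of the clean and noisy subset gradients. Substituting the relation $g_{noise}^{(c)}(\rvw_t) = -\alpha_t\, g_{clean}^{(c)}(\rvw_t)$ from Assumption~\ref{assum:direction_segment}, each class term becomes $(1-\alpha_t)\, g_{clean}^{(c)}(\rvw_t)$, so $\nabla L(\rvw_t;\mathcal{D}) = (1-\alpha_t)\sum_c g_{clean}^{(c)}(\rvw_t)$.

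Next, I would use Eq.~(\ref{eq:g_corr_epsilon}), namely $g_{clean}^{(c)}(\rvw_t) = \frac{1}{\alpha_t+1}\hat{g}^{(c)}(\rvw_t)$. The derivation of this identity itself is the one place worth spelling out: since $\mathcal{D}^{(c)} = \mathcal{D}_{clean}^{(c)} \cup \mathcal{D}_{noise}^{(c)}$ as input sets, and the ground-truth label on $\mathcal{D}_{noise}^{(c)}$ agrees with the (observed) label on $\mathcal{D}_{clean}^{(c)}$ — both equal class $c$ — the sum $\hat{g}^{(c)}$ over ground-truth-labeled points with true class $c$ equals $g_{clean}^{(c)} + (\text{gradient on }\mathcal{D}_{noise}^{(c)}\text{ with label flipped back to }c)$. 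By the same sign/direction argument used in Section~\ref{sec:sample_gradient_init} (flipping the label flips the sample-wise gradient), that second term is $+\alpha_t g_{clean}^{(c)}$ rather than $-\alpha_t g_{clean}^{(c)}$, giving $\hat{g}^{(c)} = (1+\alpha_t) g_{clean}^{(c)}$. Plugging this into the expression from the previous paragraph yields $\nabla L(\rvw_t;\mathcal{D}) = \frac{1-\alpha_t}{1+\alpha_t}\sum_c \hat{g}^{(c)}(\rvw_t) = \frac{1-\alpha_t}{1+\alpha_t}\, n\, \nabla L(\rvw_t;\hat{\mathcal{D}})$ up to the normalization constant $|\hat{\mathcal{D}}| = n$, which I would fold into $\eta$.

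Finally, I would substitute this into the gradient descent update Eq.~(\ref{eq:gd_update_rule}): $\rvw_{t+1} = \rvw_t - \eta\,\nabla L(\rvw_t;\mathcal{D}) = \rvw_t - \eta\,\tfrac{1-\alpha_t}{1+\alpha_t}\,\nabla L(\rvw_t;\hat{\mathcal{D}})$, matching the claimed rule with $\eta'_t = \tfrac{1-\alpha_t}{1+\alpha_t}\eta$. Positivity of $\eta'_t$ is immediate: since $\alpha_t\in(0,1)$, both $1-\alpha_t>0$ and $1+\alpha_t>0$, so $\eta'_t>0$. I do not anticipate a genuine obstacle here — the statement is essentially a bookkeeping consequence of the assumption — but the step that deserves care is the one I flagged above: correctly accounting for the normalization factors ($|\mathcal{D}_k^{(c)}|$ versus $n$) and verifying that restoring a flipped label turns $-\alpha_t g_{clean}^{(c)}$ into $+\alpha_t g_{clean}^{(c)}$ (equivalently, that $\hat g^{(c)}$ picks up a $+$ sign), so that the two contributions add rather than cancel. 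One should also note this argument is stated per class and then summed, and that the same $\alpha_t$ is assumed to work for both classes $c\in\{0,1\}$, which is exactly what Assumption~\ref{assum:direction_segment} grants.
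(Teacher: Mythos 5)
Your proposal is correct and follows essentially the same route as the paper, which simply combines Assumption~\ref{assum:direction_segment} with Eqs.~(\ref{eq:g_corr_epsilon}) and (\ref{eq:full_grad_g_corr}) to write $\nabla L(\rvw_t;\mathcal{D}) = \frac{1-\alpha_t}{1+\alpha_t}\nabla L(\rvw_t;\hat{\mathcal{D}})$ and substitute into the GD update. You additionally spell out the sign-flip derivation of $\hat{g}^{(c)} = (1+\alpha_t)g_{clean}^{(c)}$, which the paper asserts without proof, and your handling of the $1/n$ normalization (which cancels since $|\mathcal{D}|=|\hat{\mathcal{D}}|$) is fine.
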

\begin{remark}[mini-batch scenario] In  mini-batch SGD, similar relation of Eq.(\ref{eq:thm_update_rule}) also holds for a mini-batch estimation $\nabla L(\rvw_0;\mathcal{B})$, as long as  the sampling of the mini-batch is independent of the label noise and the batch size $|\mathcal{B}|$ is not too small such that the majority of samples are clean in the batches. Hence, in the following, we do not explicitly write out the dependence on the mini-batches.
\end{remark}
The theorem states that, after adding label noise to the training dataset, the gradient descent update is equivalent to  the one without label noise (except a different learning rate $\eta'_t < \eta$). In another word, the gradient descent does not essentially ``see'' the noisy data and its update direction is determined only by the clean samples.

\paragraph{Clean-priority learning.}
This theorem implies the following learning characteristics of what we call {\it clean-priority learning}, as we described below. 

{\it Training loss and accuracy on subsets.} The loss $L(\rvw;\mathcal{D}_{clean})$ on the clean subset keeps decreasing, while the loss $L(\rvw;\mathcal{D}_{noise})$ on the noisy subset is increasing, as formally stated in the following Theorem (see the proof in Appendix~\ref{secapp:pf_segment_loss}):
\begin{theorem}\label{coro:segment_loss}
    Suppose Assumption \ref{assum:direction_segment} holds with time $T>0$ and sequence $\{\alpha_t\}_{t=0}^T$, $\alpha_t \in(0,1)$. We have, for all $t\in [0,T]$ and sufficiently small $\eta$,
    \begin{align*}
        &L(\rvw_{t+1}; \mathcal{D}_{clean}) < L(\rvw_t; \mathcal{D}_{clean}); \\
        & L(\rvw_{t+1}; \mathcal{D}_{noise}) > L(\rvw_t; \mathcal{D}_{noise}).
    \end{align*}
\end{theorem}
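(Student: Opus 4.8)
The plan is to use Proposition~\ref{thm:update_rule}, which tells us that for $t \le T$ the gradient descent iteration is equivalent to a gradient step on the \emph{clean-only} objective $L(\rvw_t;\hat{\mathcal{D}})$ with a positive effective learning rate $\eta'_t = \frac{1-\alpha_t}{1+\alpha_t}\eta > 0$. Since $\hat{\mathcal{D}}$ and $\mathcal{D}_{clean}$ have the same labels on $\mathcal{D}_{clean}$ (indeed $\mathcal{D}_{clean} \subseteq \hat{\mathcal{D}}$ up to the relabeled portion), the true update direction is $-\eta'_t \nabla L(\rvw_t;\hat{\mathcal{D}})$, which I will relate to both $\nabla L(\rvw_t;\mathcal{D}_{clean})$ and $\nabla L(\rvw_t;\mathcal{D}_{noise})$.

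First I would do a first-order Taylor expansion of each subset loss along the update step. Writing $\Delta\rvw = \rvw_{t+1}-\rvw_t = -\eta'_t \nabla L(\rvw_t;\hat{\mathcal{D}})$, for any dataset $\mathcal{S}$ we have
\begin{equation*}
L(\rvw_{t+1};\mathcal{S}) = L(\rvw_t;\mathcal{S}) - \eta'_t \langle \nabla L(\rvw_t;\mathcal{S}), \nabla L(\rvw_t;\hat{\mathcal{D}})\rangle + O(\eta^2).
\end{equation*}
So the sign of the first-order change is governed by the inner product $\langle \nabla L(\rvw_t;\mathcal{S}), \nabla L(\rvw_t;\hat{\mathcal{D}})\rangle$. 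For $\mathcal{S}=\mathcal{D}_{clean}$: using Assumption~\ref{assum:direction_segment} and Eq.~(\ref{eq:g_corr_epsilon}), the clean subset gradient is a \emph{positive} multiple of $\hat{g}^{(c)}$ per class, hence $\nabla L(\rvw_t;\mathcal{D}_{clean})$ is a positive multiple of $\nabla L(\rvw_t;\hat{\mathcal{D}})$ (summing over $c$ and noting the per-class alignment; here I would need to be slightly careful that the alignment is consistent across classes, which follows because $\alpha_t$ is the same for both classes in the assumption). Therefore $\langle \nabla L(\rvw_t;\mathcal{D}_{clean}), \nabla L(\rvw_t;\hat{\mathcal{D}})\rangle = c_t \|\nabla L(\rvw_t;\hat{\mathcal{D}})\|^2 > 0$ for some $c_t>0$, giving $L(\rvw_{t+1};\mathcal{D}_{clean}) < L(\rvw_t;\mathcal{D}_{clean})$ once $\eta$ is small enough to dominate the $O(\eta^2)$ remainder. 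For $\mathcal{S}=\mathcal{D}_{noise}$: by Assumption~\ref{assum:direction_segment}, $g^{(c)}_{noise}(\rvw_t) = -\alpha_t g^{(c)}_{clean}(\rvw_t)$ with $\alpha_t>0$, so $\nabla L(\rvw_t;\mathcal{D}_{noise})$ is a \emph{negative} multiple of $\nabla L(\rvw_t;\hat{\mathcal{D}})$, the inner product is strictly negative, and $L(\rvw_{t+1};\mathcal{D}_{noise}) > L(\rvw_t;\mathcal{D}_{noise})$ for small $\eta$.

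The main obstacle is controlling the $O(\eta^2)$ Taylor remainder uniformly over $t \in [0,T]$, so that a single ``sufficiently small $\eta$'' works for all steps and both inequalities are strict. This needs a bound on the operator norm of the Hessian of each subset loss (or a smoothness constant) along the trajectory, plus a lower bound on the alignment constants $c_t$ and on $\|\nabla L(\rvw_t;\hat{\mathcal{D}})\|$ — in particular, the argument degenerates if the clean gradient vanishes, so implicitly one assumes the clean data is not yet fit on $[0,T]$, i.e. $\nabla L(\rvw_t;\hat{\mathcal{D}}) \neq 0$. I would handle this by invoking standard smoothness of the logistic loss composed with the (fixed-width, hence bounded on the relevant region) network, using that $T$ is finite and the iterates stay in a compact set, so the remainder is $O(\eta^2)$ with a constant uniform in $t\le T$; then choosing $\eta$ below the threshold $\min_{t\le T} c_t \|\nabla L(\rvw_t;\hat{\mathcal D})\|^2 / (\text{smoothness const})$ closes both strict inequalities simultaneously. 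A cleaner alternative, if one prefers to avoid the remainder bookkeeping, is to pass to the gradient-flow limit where the inequalities become $\frac{d}{dt}L(\rvw_t;\mathcal{D}_{clean}) < 0$ and $\frac{d}{dt}L(\rvw_t;\mathcal{D}_{noise}) > 0$ exactly, but since the paper works in discrete time I would keep the Taylor argument and just be explicit about the smoothness constant.
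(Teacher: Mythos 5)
Your proposal is correct and follows essentially the same route as the paper: both hinge on Proposition~\ref{thm:update_rule} together with the observation that, under Assumption~\ref{assum:direction_segment}, $\nabla L(\rvw_t;\mathcal{D}_{clean})$ is a positive multiple and $\nabla L(\rvw_t;\mathcal{D}_{noise})$ a negative multiple of $\nabla L(\rvw_t;\hat{\mathcal{D}})$, so the step is a descent direction for the clean loss and an ascent direction for the noisy loss. The only difference is that you make the first-order Taylor expansion and the control of the $O(\eta^2)$ remainder explicit, whereas the paper absorbs this into the phrase ``sufficiently small $\eta$.''
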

 Accordingly, the training accuracy on the clean subset is increased, and that on the noisy subset is decreased.

{\it Residual magnitude: $|f(\rvw;\rvx)-y|$.}  As a consequence of the decreasing clean subset loss $L(\rvw;\mathcal{D}_{clean})$, the clean training samples are learned,  in the sense that the network output $f(\rvw;\rvx)$ moves towards its corresponding label $y$, i.e., $|f(\rvw;\rvx)-y|$ decreases on the clean subset. On the other hand, the increase of the  $L(\rvw;\mathcal{D}_{noise})$ results in that, on the noisy subset, the network output $f(\rvw;\rvx)$ moves away from its corresponding label $y$, but towards its ground truth label $\hat{y}$. Namely, the noisy subset is not learnt.

{\it Test loss.} As the test dataset $\bar{\mathcal{D}}$ is not label-corrupted and is drawn from the same data distribution as $\hat{\mathcal{D}}$, it is expected that the update rule in Eq.(\ref{eq:thm_update_rule}) decreases the test loss $L(\rvw; \bar{\mathcal{D}})$.

Figure \ref{fig:mnist2_3plots} shows the clean-priority learning phenomenon on a binary classification of two classes of MNIST. The relevant part is the early stage, i.e., before the early stopping point (left of the vertical dash line). As one can see, in this stage, the  prediction error and noisy subset loss $L(\rvw;\mathcal{D}_{noise})$   keep increasing (See Appendix \ref{sec:app_exp_setup} for subset loss curves). Especially, the prediction error increases from a random guess (error $=0.5$) at initialization towards $100\%$. Meanwhile, the clean subset loss and prediction error  keep decreasing.  Moreover, the average residual magnitude $|f(\rvw;\rvx)-y|$ decreases on the clean subset, but increases on the noisy subset, implying that only clean subset is learnt. These behaviors illustrate that in the early stage the learning dynamics prioritize the clean samples.

In short, in the early stage, the {\it clean-priority learning} prioritizes the learning on clean training samples. The interesting point is that, although it seems impossible to distinguish the clean  from the noisy directly from the data, this prioritization is possible because the model have access to the first-order information, i.e., sample-wise gradients.  Importantly, it is this awareness of the clean samples and this prioritization in the early stage that allow the possibility of achieving test performances better than the noisy level.

\subsection{Early stopping point \& later stage}
As we have seen in the above subsection, the dominance of the magnitude $\|g_{clean}^{(c)}\|$  over $\|g_{noise}^{(c)}\|$ is one of the key reasons to maintain the clean-priority learning in the early stage. However, we shall see shortly that this dominance diminish as the training goes on,  resulting in a final termination of the clean-priority learning. 

\paragraph{Diminishing dominance of the clean gradient.}
 Recall that the sample-wise gradient is proportional to the magnitude of the residual:
$
  \nabla l(\rvw) \propto y-f(\rvw;\rvx).
$
The learning of a sample, i.e., decreased $|y-f(\rvw;\rvx)|$, results in a decrease in the magnitude $|\nabla l(\rvw)|$. As an effect of the clean-priority learning, the residuals magnitude $|f(\rvw;\rvx)-y|$ evolves differently for different data subsets: {\it decreases} on the clean subset $\mathcal{D}_{clean}$, but {\it increases}  on the noisy subset $\mathcal{D}_{noise}$.
This difference leads to the diminishment of the dominance of clean subset $\|g_{clean}^{(c)}(\rvw)\|$, which originates from the dominance of the population of clean training samples.

\begin{theorem}[Diminishing dominance  of the clean gradient]\label{thm:diminishing_dominance}
     Assume the neural network is infinitely wide and the learning rate $\eta$ of the gradient descent is sufficiently small. Suppose Assumption \ref{assum:direction_segment} holds with time $T>0$ and sequence $\{\alpha_t\}_{t=0}^T \in (0,1)^T$.    The sequence $\{\alpha_t\}_{t=0}^T$  monotonically increases: for all $t\in [0,T]$, $\alpha_{t+1} > \alpha_t$. 
\end{theorem}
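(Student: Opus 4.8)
The plan is to track the ratio $\alpha_t = \|g_{noise}^{(c)}(\rvw_t)\|/\|g_{clean}^{(c)}(\rvw_t)\|$ (which by Assumption~\ref{assum:direction_segment} is the same for both classes and equals the sign-flipped alignment constant) and show it strictly increases under one gradient step. The starting point is Eq.(\ref{eq:sample_gradient_expression}): since $\nabla l(\rvw_t;\rvx,y) = (f(\rvw_t;\rvx)-y)\nabla h(\rvw_t;\rvx)$ and, for an infinitely wide network, $\nabla h(\rvw_t;\rvx) = \nabla h(\rvw_0;\rvx)$ is frozen, the only thing that changes in each sample-wise gradient is the scalar residual $f(\rvw_t;\rvx)-y$. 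Hence $\|g_{clean}^{(c)}(\rvw_t)\|$ and $\|g_{noise}^{(c)}(\rvw_t)\|$ are governed entirely by how the residuals on $\mathcal{D}_{clean}^{(c)}$ and $\mathcal{D}_{noise}^{(c)}$ evolve. Under the collinearity from Assumption~\ref{assum:direction_segment}, the full update is equivalent (Proposition~\ref{thm:update_rule}) to a step along $\nabla L(\rvw_t;\hat{\mathcal{D}})$, i.e. a step that treats all samples as if they had ground-truth labels and so pushes \emph{every} clean residual toward $0$ and every noisy residual away from its corrupted label $y$ (toward $\hat y$).

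The key computation is a first-order (in $\eta$) expansion of each residual. For a sample $(\rvx,y)$, $f(\rvw_{t+1};\rvx)-y = f(\rvw_t;\rvx)-y + \langle \nabla f(\rvw_t;\rvx), \rvw_{t+1}-\rvw_t\rangle + O(\eta^2)$, and $\rvw_{t+1}-\rvw_t = -\eta'_t \nabla L(\rvw_t;\hat{\mathcal D})$ with $\eta'_t>0$. Using $\nabla f = f(1-f)\nabla h$ (sigmoid) and the frozen-$\nabla h$ property, each inner product $\langle \nabla h(\rvx_i), \nabla h(\rvx_j)\rangle$ is a fixed NTK entry, positive and large for same-(ground-truth-)class pairs (this is exactly where Theorem~\ref{thm:angle}/Corollary~\ref{corollary:angle} enter: small $\theta_d$ within a class $\Rightarrow$ small $\theta_h$ $\Rightarrow$ positive kernel). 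So to first order: every clean residual in class $c$ shrinks in magnitude by an amount proportional to $\eta'_t$ times a positive quantity dominated by the within-class kernel mass, and every noisy residual in class $c$ (which has the opposite-sign residual but sees the same $\nabla h$ directions) \emph{grows} in magnitude. Summing and using the collinearity to reduce everything to scalars, $\|g_{clean}^{(c)}(\rvw_{t+1})\| = \|g_{clean}^{(c)}(\rvw_t)\| - \eta'_t A_t + O(\eta^2)$ and $\|g_{noise}^{(c)}(\rvw_{t+1})\| = \|g_{noise}^{(c)}(\rvw_t)\| + \eta'_t B_t + O(\eta^2)$ with $A_t, B_t > 0$. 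Therefore $\alpha_{t+1} = \|g_{noise}^{(c)}(\rvw_{t+1})\|/\|g_{clean}^{(c)}(\rvw_{t+1})\|$ has numerator increasing and denominator decreasing, so for $\eta$ small enough $\alpha_{t+1} > \alpha_t$.

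The main obstacle is making the statement ``clean residuals shrink, noisy residuals grow'' rigorous at the level of the \emph{norms} $\|g_{clean}^{(c)}\|$, $\|g_{noise}^{(c)}\|$ rather than sample-by-sample, because the update direction $\nabla L(\rvw_t;\hat{\mathcal D})$ mixes contributions from all classes and from the $\mathcal{D}_{other}^{(c)}$ blocks, whose kernel interactions with class $c$ need not be sign-definite. Two things rescue this: (i) Assumption~\ref{assum:direction_segment} already posits that within each class the clean and noisy subset gradients remain collinear, which forces the relevant projections to behave coherently — so one can phrase the whole argument in terms of the $1$-D coordinate along $g_{clean}^{(c)}(\rvw_0)$ and a controlled orthogonal remainder; and (ii) the cross-class/other contributions enter the residual change through $\langle \nabla h(\rvx_i), g(\rvw_t)\rangle$ where $g$ is itself dominated, in magnitude, by the clean within-class mass (this is the $(1-\delta)/\delta$ dominance of Eq.(\ref{eq:init_magnitude_ratio}), which persists for $t\le T$ by the induction hypothesis $\alpha_t\in(0,1)$). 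I would organize the proof as: (1) reduce $\alpha_t$ to the scalar residual sums via Assumption~\ref{assum:direction_segment} and Eq.(\ref{eq:sample_gradient_expression}); (2) write the exact one-step residual update and Taylor-expand to first order in $\eta$; (3) identify the sign of the first-order term for clean vs. noisy samples using the positive within-class NTK block and $\alpha_t<1$; (4) conclude $A_t,B_t>0$ and hence $\alpha_{t+1}>\alpha_t$ for small $\eta$. Steps (2)--(3) carry all the analytic weight; the rest is bookkeeping.
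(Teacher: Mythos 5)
Your high-level plan (show that $\|g_{noise}^{(c)}\|$ increases while $\|g_{clean}^{(c)}\|$ decreases under one small step, then take the ratio) matches the paper's, as does your use of the frozen-$\nabla h$, linearized-model structure of the infinitely wide network. The genuine gap is in your step (3): you want the first-order change of each \emph{individual} clean residual to be negative and of each individual noisy residual to be positive, and to get this you invoke positivity of the within-class NTK entries (via Theorem~\ref{thm:angle} and Corollary~\ref{corollary:angle}) plus a ``controlled orthogonal remainder'' from $\mathcal{D}_{other}^{(c)}$ and the opposite class. As you yourself flag, those cross-block kernel interactions need not be sign-definite, and neither Assumption~\ref{assum:direction_segment} (which constrains only the subset \emph{sums}, not individual samples) nor the population dominance $\alpha_t<1$ forces every clean residual to shrink. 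So the sample-by-sample monotonicity on which $A_t,B_t>0$ rests is not established, and the argument as organized does not close.

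The fix --- and it is exactly what the paper does --- is to never descend to the sample level. Sum your first-order expansion over the subset \emph{before} asking about signs: since the infinitely wide network is linear in $\rvw$, the logistic loss restricted to any subset $\mathcal{S}$ is convex in $\rvw$ with positive (semi)definite Hessian $H(\cdot;\mathcal{S})$, and
\begin{equation*}
\nabla L(\rvw_{t+1};\mathcal{D}_{clean}) = \nabla L(\rvw_t;\mathcal{D}_{clean}) + H(\xi;\mathcal{D}_{clean})(\rvw_{t+1}-\rvw_t)
\end{equation*}
for some intermediate $\xi$. Because Assumption~\ref{assum:direction_segment} makes $\rvw_{t+1}-\rvw_t$ a \emph{negative} multiple of $\nabla L(\rvw_t;\mathcal{D}_{clean})$ and simultaneously a \emph{positive} multiple of $\nabla L(\rvw_t;\mathcal{D}_{noise})$, the first-order change of $\|\nabla L(\cdot;\mathcal{D}_{clean})\|^2$ is $-2\eta(1-\alpha_t)\,\nabla L^{T}H\,\nabla L<0$ while that of $\|\nabla L(\cdot;\mathcal{D}_{noise})\|^2$ is $+2\eta\tfrac{1-\alpha_t}{\alpha_t}\,\nabla L^{T}H\,\nabla L>0$, with no sign conditions on any individual kernel entry: all cross-class and $\mathcal{D}_{other}^{(c)}$ interactions are absorbed into a single quadratic form whose sign is guaranteed by convexity. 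This replaces your steps (2)--(3) entirely; Theorem~\ref{thm:angle} and Corollary~\ref{corollary:angle} are not needed for this theorem, and your step (4) then goes through as you wrote it.
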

Please find the proof in Appendix \ref{secapp:pf_diminishing_dominance}.
As $\alpha_t$ measures this clean dominance ($\alpha_t$ close to $1$ means less dominant), this theorem indicates that the dominance diminishes as the training goes on.

Figure \ref{fig:mnist2_ratio} illustrates this diminishing dominance on the two class MNIST classification problem. In the early stage, the ratio $\|g_{clean}^{(c)}(\rvw)\|/\|g_{noise}^{(c)}(\rvw)\|$ starts with a value around the ratio of population $(1-\delta)/\delta = 1.5$, and monotonically decrease to around $1$ at or before the early stopping point, indicating that the dominance vanishes.

\begin{figure}[t]
        \centering   
        \vspace{-10pt}
        \includegraphics[width=0.85\columnwidth]
        {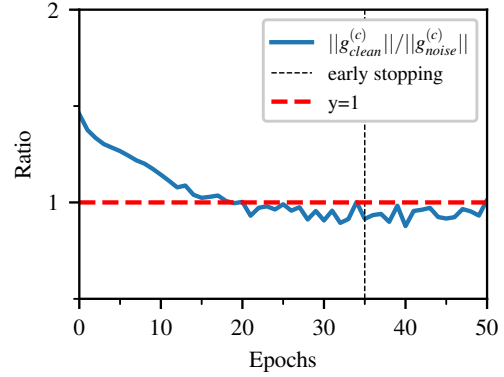}%
        \vspace{-15pt}
        \caption{Diminishing dominance of clean gradient. The ratio $\|g_{clean}^{(c)}(\rvw)\|/\|g_{noise}^{(c)}(\rvw)\|$ monotonically decreases in the early stage (before the vertical dash line), as a consequence of the clean-priority learning, as predicted by Theorem \ref{thm:diminishing_dominance}. Experiment setting is the same as in Figure \ref{fig:mnist2_3plots}.
        }
\vspace{-10pt}
    \label{fig:mnist2_ratio}
\end{figure}

\begin{figure*}[th]
        \centering        
        \includegraphics[width=1.8\columnwidth]
        {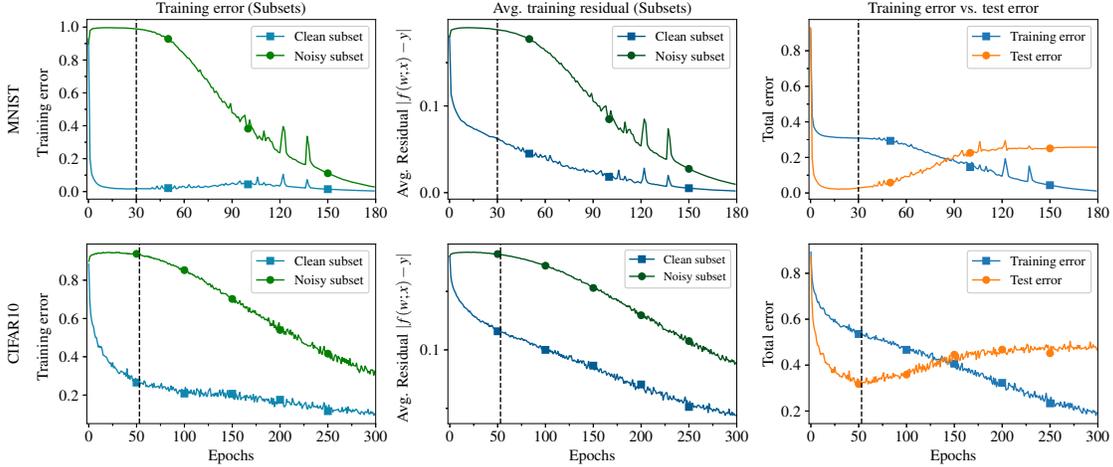}%
        \vspace{-10pt}
        \caption{Learning dynamics on multi-class classification. {\bf Left}: in the early stage (before the vertical dash line), clean subset error decreases, while  noisy subset error increases. {\bf Middle}: In the early stage, the clean subset average residual $\mathbb{E}_{(\rvx,y)\in \mathcal{D}_{clean}}[\|f(\rvw;\rvx)-y\|]$ decreases, i.e., on average the network outputs of clean subset move towards the labels, indicating a ``learning'' on the clean subset. One the other hand, the noisy subset average residual, $\mathbb{E}_{(\rvx,y)\in \mathcal{D}_{noise}}[\|f(\rvw;\rvx)-y\|]$, monotonically increases, indicating that the noisy subset is not-learned. {\bf Right}: total test error and total training error. See subset loss curves in Appendix \ref{sec:app_verification}.
        }
        \vspace{-10pt}
    \label{fig:multi-class}
\end{figure*}

\paragraph{Learning the noisy samples.}
In the later stage (i.e., after the early stopping point), the magnitudes of $\|g_{clean}^{(c)}(\rvw)\|$ and $\|g_{noise}^{(c)}(\rvw)\|$ are similar, and there is no apparent dominance of one over the other. Then, the model and algorithm do not distinguish the clean subset from the noisy one, and there will be no clean-priority learning. In this stage, the model learns both the clean and noisy subsets, aiming at achieving exact fitting of the training data. Ultimately, training errors of both subsets converge to zero. 

It is expected that in this stage the loss and prediction error on the test dataset $\bar{\mathcal{D}}$ become worse, as the learning on the noisy subset contaminates the performance achieved by the clean-priority learning in the earlier stage. 

As illustrated in Figure \ref{fig:mnist2_3plots}, after the early stopping point, the noisy subset starts to be learnt. Specifically, both training loss and error on this subset turn to decrease towards zero; the average residual magnitude $|f(\rvw;\rvx)-y|$ turn to decrease, indicating that the network output $f(\rvw;\rvx)$ is learnt to move towards its (corrupted) label. It is worth to note that the learning on the clean subset is still ongoing, as both training loss and error on this subset keeps decreasing.

In high level, before the first stage, the learning procedure prioritizes the clean training samples, allowing the superior-noise-level performance on the test dataset; in later stage, the learning procedure picks up the noisy samples, worsening the test performance toward the noise-level.

\section{Multi-class classification}
\label{sec:multi_class}

In this section we show that  multi-class classification problems exhibit the same learning dynamics, especially the clean-priority learning, as described in Section \ref{sec:binary_dynamics}. 

For multi-class classification, we consider a variant of the sample-wise gradient, {\it single-logit sample-wise gradient}.

\paragraph{Single-logit sample-wise gradients.}
In a $C$-class classification problem, the neural network $f$ has $C$ output logits, and  the labels are a $C$-dimensional one-hot encoded  vectors. One can view the neural network as $C$ co-existing binary classifiers. Specifically, for each $c\in \{1,2,\cdots, C\}$, the $c$-th logit $f_c$ is a binary classifier, and the $c$-th component of the label $y_c\in \{0,1\}$ is the binary  label for $f_c$.

By Eq.(\ref{eq:sample_gradient_expression}), the sample-wise gradient can be written as
$
    \nabla l(\rvw) =   \sum_{c=1}^C \nabla l_c(\rvw),
$
where 
\begin{equation}\label{eq:single_logit_g}
\nabla l_c(\rvw) \triangleq (f_c(\rvw;\rvx)-y_c) \nabla h_c(\rvw;\rvx)
\end{equation}
is  the {\it single-logit sample-wise gradient},
which only depends on quantities of the corresponding single logit.

We point out that, the cleanness of a sample is only well defined with respect to each single logit, but not to the whole output.
For example, consider a sample with ground truth label $0$ but is incorrectly labeled as class $1$. For all the rest  binary classifiers, except the $0$-th and $1$-st, this sample is always considered as the negative class, as $y_c=0$ for all $c \ne 0$ or  $1$; hence, the noisy sample is considered ``clean'', for these  $C-2$ binary classifiers. Therefore, a noisy sample is not necessarily noisy for all the $C$ binary classifiers. 

With this observation, we  consider the single-logit sample-wise gradient $\nabla l_c(\rvw)$ instead. 

\paragraph{At initialization.} Given $c\in \{1,2,\cdots,C\}$, the $c$-logit sub-network $h_c$ (before  {\it softmax})  is the same as the network $h$  discussed in Section \ref{sec:intialization}, and the output $f_c\in (0,1)$. 
Hence, all the directional analysis for binary case (Section \ref{sec:intialization}) still applies to the single-logit sample-wise gradient $\nabla l_c(\rvw)$. See Appendix \ref{sec:app_verification} for numerical verification.

Different from the {\it sigmoid} output activation which tends to predict an average of $0.5$ before training, the {\it softmax} has an average output $f_c$ around $1/C$ with random guess at initialization. This leads to $\mathbb{E} |f_c(\rvw_0;\rvx)-y_c| = 1-1/C$ when $y_c=1$, and $\mathbb{E} |f_c(\rvw_0;\rvx)-y_c| = 1/C$ when $y_c=0$. Recalling that $\mathcal{D}^{(c)}_{clean}$ and $\mathcal{D}^{(c)}_{noise}$ (hence the corresponding $\nabla h$) have the same distribution, using Eq.(\ref{eq:single_logit_g}) we have 
\begin{subequations}\label{eq:g_corr_epsilon_multi}
\begin{align}
 & g_{noise}^{(c)}(\rvw_0) \approx -\delta \hat{g}^{(c)}(\rvw_0)/(C-1), \\
 & g_{clean}^{(c)}(\rvw_0) \approx (1-\delta)\hat{g}^{(c)}(\rvw_0).
\end{align}
\end{subequations}
Therefore, we have the dominance of $\|g_{clean}^{(c)}\|$ over $\|g_{noise}^{(c)}\|$ at initialization, with a ratio  
\[\|g_{clean}^{(c)}(\rvw_0)\|/\|g_{clean}^{(c)}(\rvw_0)\| \approx (C-1)(1-\delta)/\delta.\]

\begin{figure}[t]
        \centering        
        \vspace{-5pt}
        \includegraphics[width=1.00\columnwidth]
        {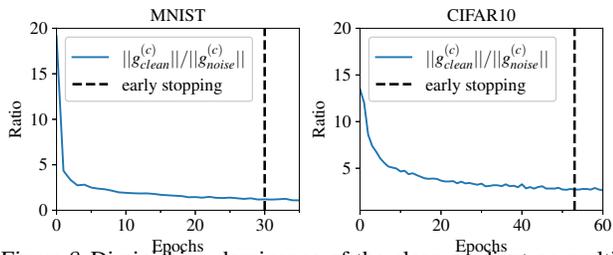}%
        \vspace{-20pt}
        \caption{Diminishing dominance  of the clean gradient on multi-class classification. The ratio $\|g_{clean}^{(c)}(\rvw)\|/\|g_{noise}^{(c)}(\rvw)\|$ monotonically decreases in the early stage (before the vertical dash line), as a consequence of the clean-priority learning. {\bf Left}: MNIST on CNN (noise level $\delta=0.3$); {\bf Right}: CIFAR-10 on ResNet (noise level $\delta=0.4$).
        }
        \vspace{-8pt}
    \label{fig:mnist10_ratio}
\end{figure}

\paragraph{Learning dynamics.} As the configuration of $\nabla l_c$ is similar to that of a binary classification, we expect similar learning dynamics as discussed in Section \ref{sec:binary_dynamics}, especially the clean-priority learning,  happen for multi-class classification. 

\begin{figure}[ht]
        \centering        
        \includegraphics[width=1.0\columnwidth]
        {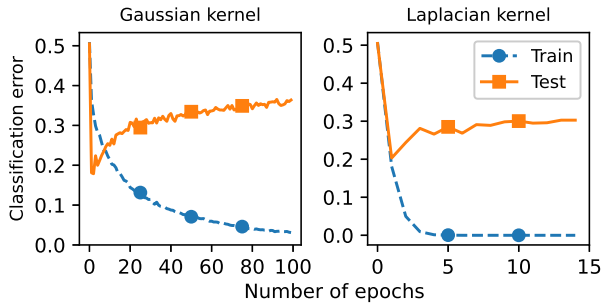}%
        \vspace{-10pt}
        \caption{Classification errors on training and test datasets of MNIST, 7 versus 9 subset, using kernel machine with both Laplacian and Gaussian kernels. We added label noise level of 0.4 to the training set. On test data set, the classification error exhibits a U-shaped curve, and can be significantly lower than the noise level during training.
        }
        \vspace{-10pt}
    \label{fig:kernel}
\end{figure}

We conduct  experiments to classify the MNIST (with added label noise $\delta=0.3$) and CIFAR-10 (with added label noise $\delta=0.4$) datasets using a CNN and a ResNet, respectively. As is shown in Figure \ref{fig:mnist10_ratio} and Figure \ref{fig:multi-class}, in most of the early stage, the clean subset has clean dominance over the noise subset and the dynamics shows the clean-priority learning characteristic, decreasing the clean subset error and residual, but increasing the noisy subset error and residual. Furthermore, the dominance of the clean subset monotonically decreases (Figure \ref{fig:mnist10_ratio}) until the early stopping point. In the later stage, the networks start to learn the noisy subsets. See the detailed experimental setup in Appendix \ref{sec:app_exp_setup}.

\section{Discussion and Future work}
In this section, we aim to provide some insights into the relationship between the clean-priority phenomena described in this paper and previous works, as well as how this phenomena may manifest in other gradient descent-based learning algorithms. 

Firstly, previous studies observed that,  for certain very large models,  the test classification error may exhibit a second descent in the later stages  of training \cite{nakkiran2021deep}. In such scenarios, our analysis still holds for the first descent and the subsequent ascent that follows it. Regarding the second descent, we hypothesize that it may be connected to certain types of underlying feature learning dynamics, for example, the Expected Gradient Outer Product (EGOP) \cite{radhakrishnan2022feature}. However, further investigation is needed to confirm this hypothesis, and we leave it as a topic for future research.

Secondly, as can be seen in Figure \ref{fig:intro_illu}, \ref{fig:mnist2_3plots}, and \ref{fig:multi-class},  This occurs sometime after the early stopping point. It suggests that while the model is fitting the noise, the test performance at convergence is not catastrophic, and still outperforms the random guess. This, in turn, implies that neural networks demonstrate a tempered over-fitting behavior, as described in \cite{mallinarbenign}.


\subsubsection*{Acknowledgments}
We are grateful for the support from the National Science Foundation (NSF) and the Simons Foundation for the Collaboration on the Theoretical Foundations of Deep Learning (\url{https://deepfoundations.ai/}) through awards DMS-2031883 and \#814639  and the TILOS institute (NSF CCF-2112665).
This work used NVIDIA V100 GPUs NVLINK and HDR IB (Expanse GPU) at SDSC Dell Cluster through allocation TG-CIS220009 and also, Delta system at the National Center for Supercomputing Applications through allocation bbjr-delta-gpu from the Advanced Cyberinfrastructure Coordination Ecosystem: Services \& Support (ACCESS) program, which is supported by National Science Foundation grants \#2138259, \#2138286, \#2138307, \#2137603, and \#2138296.


\bibliography{example_paper}
\bibliographystyle{icml2023}

\newpage
\appendix
\onecolumn
\section{Technical proofs}

\subsection{Proof of Theorem \ref{thm:angle}}\label{sec:proof_thm}
We restate Theorem \ref{thm:angle} below for easier reference.
\begin{theorem}[Theorem \ref{thm:angle}]
    Consider the two-layer neural networks, $h_1$ and $h_2$, defined in Eq.(\ref{eq:2_layer_nn_linear}) and (\ref{eq:2_layer_nn_relu}), with infinite width $m$. Given any two inputs $\rvx$ and $\rvz$, the angles $\theta_h(\rvx,\rvz)$ and $\theta_d(\rvx,\rvz)$ satisfies the following, at network initialization $\rvw_0$:
    for the linear neural network $h_1$,
    \begin{equation}
        \theta_h(\rvx,\rvz)  = \theta_d(\rvx,\rvz);
    \end{equation}
    for the ReLU neural network $h_2$,
    \begin{equation}
        \cos\theta_h(\rvx,\rvz)  = \frac{\pi-\theta_d(\rvx,\rvz)}{\pi}\cos\theta_d(\rvx,\rvz)+\frac{1}{2\pi}\sin \theta_d(\rvx,\rvz).
    \end{equation}
\end{theorem}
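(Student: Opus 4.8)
\textbf{Proof proposal for Theorem~\ref{thm:angle}.}

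The plan is to compute the relevant inner products of model derivatives in the infinite-width limit and read off the angle. First I would write down $\nabla h(\rvw_0;\rvx)$ explicitly for each architecture. For the linear network $h_1(\rvw;\rvx)=\frac{1}{\sqrt m}\rvv^T A\rvx$, the gradient with respect to $A$ is $\frac{1}{\sqrt m}\rvv\rvx^T$ and the gradient with respect to $\rvv$ is $\frac{1}{\sqrt m}A\rvx$. Then $\langle \nabla h(\rvw_0;\rvx),\nabla h(\rvw_0;\rvz)\rangle = \frac{1}{m}\|\rvv\|^2\,\rvx^T\rvz + \frac{1}{m}\rvx^T A^T A\rvz$. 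As $m\to\infty$, $\frac{1}{m}\|\rvv\|^2\to 1$ and $\frac{1}{m}A^TA\to \rmI_d$ by the law of large numbers (entries i.i.d.\ $\mathcal N(0,1)$), so the inner product converges to $2\,\rvx^T\rvz$; similarly $\|\nabla h(\rvw_0;\rvx)\|^2\to 2\|\rvx\|^2$. Hence $\cos\theta_h(\rvx,\rvz)\to \frac{\rvx^T\rvz}{\|\rvx\|\|\rvz\|}=\cos\theta_d(\rvx,\rvz)$, giving $\theta_h=\theta_d$.

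For the ReLU network $h_2(\rvw;\rvx)=\frac{1}{\sqrt m}\rvv^T\sigma(A\rvx)$, I would split the gradient into its two blocks. The $\rvv$-block contributes $\frac{1}{m}\sigma(A\rvx)^T\sigma(A\rvz)=\frac{1}{m}\sum_{j}\sigma(\rva_j^T\rvx)\sigma(\rva_j^T\rvz)$, which converges to $\mathbb E_{\rva\sim\mathcal N(0,\rmI_d)}[\sigma(\rva^T\rvx)\sigma(\rva^T\rvz)]$; this is the classical arc-cosine kernel of order $1$ \cite{cho2009kernel}, equal to $\frac{\|\rvx\|\|\rvz\|}{2\pi}\big(\sin\theta_d+(\pi-\theta_d)\cos\theta_d\big)$. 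The $A$-block contributes $\frac{1}{m}\sum_j v_j^2\,\mathbf 1[\rva_j^T\rvx>0]\,\mathbf 1[\rva_j^T\rvz>0]\,\rvx^T\rvz$, and since $v_j^2$ is independent of $A$ with mean $1$, this converges to $\rvx^T\rvz\cdot \Pr_{\rva}[\rva^T\rvx>0,\rva^T\rvz>0]=\rvx^T\rvz\cdot\frac{\pi-\theta_d}{2\pi}$ (the standard orthant probability for a $2$-dimensional Gaussian with correlation $\cos\theta_d$). Adding the two blocks and specializing to $\rvz=\rvx$ to get the norms (where $\theta_d=0$), one finds $\langle\nabla h,\nabla h\rangle\to \|\rvx\|^2\big(\tfrac12+\tfrac12\big)=\|\rvx\|^2$ in the sense that the normalization is $\|\nabla h(\rvw_0;\rvx)\|\,\|\nabla h(\rvw_0;\rvz)\|\to\|\rvx\|\|\rvz\|$. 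Dividing through and collecting the $\frac{\pi-\theta_d}{\pi}\cos\theta_d$ and $\frac{1}{2\pi}\sin\theta_d$ terms yields the stated formula for $\cos\theta_{h_2}$.

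To make this rigorous I would phrase each limit as an almost-sure (or in-probability) convergence of the relevant empirical averages over the $m$ hidden units, invoking the strong law of large numbers for the i.i.d.\ summands indexed by $j\in\{1,\dots,m\}$, and then use that $\cos\theta$ is a continuous function of the (finitely many) inner products and norms involved, so the angle converges to the limiting value. The cross terms between the $\rvv$-block and the $A$-block vanish because in the ReLU case the two blocks are inner-product-orthogonal in expectation (the $\rvv$-block of one input dotted with the $A$-block of the other mixes an odd function of $\rvv$ against something of zero mean), and in any case they live in disjoint coordinate sets of parameter space, so the total inner product is exactly the sum of the per-block inner products.

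The main obstacle is the ReLU computation: correctly evaluating the two Gaussian expectations --- the arc-cosine kernel $\mathbb E[\sigma(\rva^T\rvx)\sigma(\rva^T\rvz)]$ and the joint-activation (orthant) probability $\Pr[\rva^T\rvx>0,\ \rva^T\rvz>0]$ --- and then bookkeeping the $\frac{1}{\sqrt m}$ scalings and the independence of $\rvv$ from $A$ so that the normalization by $\|\nabla h\|$ produces exactly the claimed coefficients. Everything else (the linear case, and the continuity/LLN wrapper) is routine. One subtlety worth flagging in the writeup: the statement is about the $m\to\infty$ limit of the angle, so I should be explicit that these are deterministic limits of random quantities, not exact identities at finite width.
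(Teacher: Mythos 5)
Your proposal is correct and follows essentially the same route as the paper: write the gradient as the concatenation of the $\rvv$-block and the $A$-block, pass to the infinite-width limit of the per-block empirical averages, evaluate the two Gaussian expectations (which the paper derives by direct integration over the angle $\phi$, whereas you quote the first-order arc-cosine kernel and the standard orthant probability --- the same quantities in closed form), and normalize using the $\theta_d=0$ case. The constants all check out ($\|\nabla h_2\|^2\to\|\rvx\|^2$, giving exactly $\cos\theta_h=\frac{\pi-\theta_d}{\pi}\cos\theta_d+\frac{1}{2\pi}\sin\theta_d$), so there is nothing substantive to add.
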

\begin{proof}
We  prove for  the two neural networks separately.

{\bf Two-layer linear network $h_1$.} 
According to the definition of $h_1$ (Eq.(\ref{eq:2_layer_nn_linear})), its model derivative $\nabla h_1(\rvw_0; \rvx)$ (at initialization $\rvw_0$ with input $\rvx$) can be written as
\begin{equation}
    \nabla h_1(\rvw_0; \rvx) = \frac{1}{\sqrt{m}}\mathrm{flattern}\left(A_0\rvx, \rvv_0 \rvx^T\right).
\end{equation}
Here, $A_0$ and $\rvv_0$ are the initialization instance of the parameters $A$ and $\rvv$, respectively. 

Hence, for the two inputs  $\rvx$ and  $\rvz$, the inner product 
\begin{equation}
    \langle \nabla h_1(\rvw_0; \rvx), \nabla h_1(\rvw_0; \rvz)\rangle = \frac{1}{m} \rvz^T A_0^TA_0 \rvx + \frac{1}{m}\|\rvv_0\|^2 \rvz^T \rvx.
\end{equation}
When the network width $m$ is infinite, we have the following lemma (see proof in Appendix \ref{sec:app_pf_lemma}).
\begin{lemma}\label{lemma:rand_m}
Consider a matrix $A\in\mathbb{R}^{m\times d}$, with each entry of $A$ is i.i.d. drawn from $\mathcal{N}(0,1)$. In the limit of $m\to\infty$, 
\begin{equation}
  \frac{1}{m}A^TA \to I_{d\times d}, ~~ \textrm{in  probability.}
\end{equation}
\end{lemma}

Using this lemma, we have in the limit of $m\to \infty$,
\begin{equation}
  \langle \nabla h_1(\rvw_0; \rvx), \nabla h_1(\rvw_0; \rvz)\rangle = \rvz^T I_{d\times d}\rvx + I_{1\times 1}\rvz^T \rvx =
  2 \rvz^T \rvx.
\end{equation}
Therefore,
\begin{align*}
    \cos \theta_h (\rvx, \rvz) &= \frac{\langle \nabla h_1(\rvw_0; \rvx), \nabla h_1(\rvw_0; \rvz)\rangle}{\|\nabla h_1(\rvw_0; \rvx)\|\|\nabla h_1(\rvw_0; \rvz)\|} = \frac{2 \rvz^T \rvx}{\sqrt{2}\|\rvz\| \cdot \sqrt{2}\|\rvx\|} = \cos \theta_d (\rvx, \rvz).
\end{align*}

{\bf Two-layer ReLU network $h_2$.} 
According to the definition of $h_2$ (Eq.(\ref{eq:2_layer_nn_relu})), its model derivative $\nabla h_2(\rvw_0; \rvx)$ (at initialization $\rvw_0$ with input $\rvx$) can be written as
\begin{equation}
    \nabla h_2(\rvw_0; \rvx) = \frac{1}{\sqrt{m}}\mathrm{flattern}\left(A_0\rvx \mathbb{I}_{\{A_0\rvx \ge 0\}}, \left(\rvv_0\mathbb{I}_{\{A_0\rvx \ge 0\}}\right) \rvx^T\right),
\end{equation}
where $\mathbb{I}_{\{\cdot\}}$ is the (element-wise) indicator function.

For any two inputs $\rvx$ and $\rvz$, the inner product
\begin{align*}
    \langle \nabla h_2(\rvw_0; \rvx), \nabla h_2(\rvw_0; \rvz)\rangle &= \frac{1}{m} \sum_{i=1}^m \rvz^T \rva_i \rva_i^T \rvx\mathbb{I}_{\{\rva_i^T\rvx \ge 0, \rva_i^T \rvz \ge 0\}} + \frac{1}{m}\sum_{i=1}^m v_i^2 \mathbb{I}_{\{\rva_i^T\rvx \ge 0, \rva_i^T \rvz \ge 0\}}\rvz^T\rvx
\end{align*}
where $\rva_i^T$ is the $i$-th row of the matrix $A_0$, and $v_i$ is the $i$-th component of the vector $\rvv_0$. In the limit of infinite width $m\to \infty$, this inner product converges to 
\begin{align*}
    \langle \nabla h_2(\rvw_0; \rvx), \nabla h_2(\rvw_0; \rvz)\rangle &= \underbrace{\mathbb{E}_{\rva \sim \mathcal{N}(0,I_{d\times d})} \left[\rvz^T \rva \rva^T \rvx\mathbb{I}_{\{\rva^T\rvx \ge 0, \rva^T \rvz \ge 0\}}\right]}_{\mathcal{A}} + \underbrace{\mathbb{E}_{\rva \sim \mathcal{N}(0,I_{d\times d}), v\sim \mathcal{N}(0,1)} \left[ v^2 \mathbb{I}_{\{\rva^T\rvx \ge 0, \rva^T \rvz \ge 0\}}\rvz^T\rvx\right]}_{\mathcal{B}}.
\end{align*}

As the vector $\rva$  is isotropically distributed and only appears in inner products with $\rvx$ and $\rvz$, we can assume without loss of generality that (without ambiguity, we write $\theta_d(\rvx,\rvz)$ as $\theta_d$, and $\theta_h(\rvx,\rvz)$ as $\theta_h$, for short):
\begin{equation}\label{eq:x_z_assump}
   \rvx = (\|\rvx\|, 0, 0, \cdots, 0), ~~ \rvz = (\|\rvz\|\cos \theta_d, \|\rvz\|\sin \theta_d, 0, 0, \cdots, 0).
\end{equation}
In this setting, the only relevant parts of $\rva$ are  its first two components $a_1$ and $a_2$. We write $\rvb = (a_1, a_2, 0, 0, \cdots, 0)$.

With Eq.(\ref{eq:x_z_assump}), for the term $\mathcal{A}$, we have
\begin{align*}
\mathcal{A} &= \mathbb{E}_{\rvb \sim \mathcal{N}(0,I_{2\times 2})} \left[\rvz^T \rvb \rvb^T \rvx\mathbb{I}_{\{\rvb^T\rvx \ge 0, \rvb^T \rvz \ge 0\}}\right]\\
&= \mathbb{E}_{\rvb \sim \mathcal{N}(0,I_{2\times 2})}[\|\rvb\|^2] \cdot \|\rvx\|\|\rvz\|\cdot \frac{1}{2\pi}\int_{\theta_d-\frac{\pi}{2}}^{\frac{\pi}{2}} \cos(\phi)\cos(\theta_d-\phi) \, d\phi\\
&\overset{(a)}{=} 2 \cdot \|\rvx\|\|\rvz\|\cdot \frac{1}{2\pi}\int_{\theta_d-\frac{\pi}{2}}^{\frac{\pi}{2}}\frac{1}{2}\left(\cos \theta_d + \cos(\theta_d-2\phi)\right) \, d\phi \\
&= \frac{1}{2\pi}\|\rvx\|\|\rvz\| \left( (\pi-\theta_d)\cos \theta_d + \sin \theta_d   \right)\\
&= \frac{\pi-\theta_d}{2\pi} \rvz^T \rvx + \frac{1}{2\pi}\|\rvx\|\|\rvz\|\sin\theta_d.
\end{align*}
In the equality $(a)$ above, we applied trigonometric subtraction formula for $\cos(\theta_d-\phi)$ and $\cos(\theta_d-2\phi)$, as well as the double angle formulas.

For the term $\mathcal{B}$, we have
\begin{align*}
\mathcal{B} &= \mathbb{E}_{v\sim \mathcal{N}(0,1)}\left[v^2\right]\cdot  \frac{1}{2\pi}\int_{\theta_d-\frac{\pi}{2}}^{\frac{\pi}{2}}\rvz^T\rvx\, d\phi \\
&= \frac{\pi-\theta_d}{2\pi} \rvz^T \rvx.
\end{align*}
Combining $\mathcal{A}$ and $\mathcal{B}$, we have
\begin{equation}
\langle \nabla h_2(\rvw_0; \rvx), \nabla h_2(\rvw_0; \rvz)\rangle = \frac{\pi-\theta_d}{\pi} \rvz^T \rvx + \frac{1}{2\pi}\|\rvx\|\|\rvz\|\sin\theta_d.
\end{equation}
Therefore, 
\begin{equation}
 \cos \theta_h = \frac{\langle \nabla h_1(\rvw_0; \rvx), \nabla h_1(\rvw_0; \rvz)\rangle}{\|\nabla h_1(\rvw_0; \rvx)\|\|\nabla h_1(\rvw_0; \rvz)\|} = \frac{\pi-\theta_d}{\pi}\cos \theta_d + \frac{1}{2\pi}\sin \theta_d.
\end{equation}
Therefore, we conclude the proof of the theorem.
\end{proof}
\subsection{Proof of Corollary \ref{corollary:angle}}
We restate Corollary \ref{corollary:angle} below.

\begin{corollary}[Corollary \ref{corollary:angle}]
    Consider the same networks $h_1$ and $h_2$ as in Theorem \ref{thm:angle}. For both networks, the following holds:
    \begin{enumerate}
        \item given two inputs $\rvx$ and $\rvz$, if $\theta_d(\rvx,\rvz) \ll 1$, then $\theta_h(\rvx,\rvz) \ll 1$;
        \item for any three inputs $\rvx$, $\rvz$ and $\rvz'$, if $0 \le \theta_d(\rvx,\rvz) \le \theta_d(\rvx,\rvz') \le \frac{\pi}{2}$, then $0 \le\theta_h(\rvx,\rvz) \le \theta_h(\rvx,\rvz')\le \frac{\pi}{2}$.
    \end{enumerate}
\end{corollary}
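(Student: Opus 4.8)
The plan is to derive both claims directly from the two explicit formulas in Theorem~\ref{thm:angle}. For the linear network $h_1$ the statement is immediate, since $\theta_{h_1}(\rvx,\rvz)=\theta_d(\rvx,\rvz)$ makes both assertions trivial: $\theta_d\ll 1$ gives $\theta_{h_1}\ll 1$, and monotonicity on $[0,\pi/2]$ is the identity map. So the whole argument reduces to analyzing the ReLU formula
\begin{equation}\label{eq:relu_cos}
   \cos\theta_{h_2}(\rvx,\rvz)=g(\theta_d(\rvx,\rvz)), \quad g(\theta)\triangleq\frac{\pi-\theta}{\pi}\cos\theta+\frac{1}{2\pi}\sin\theta.
\end{equation}
First I would observe that $g$ is continuous on $[0,\pi]$ with $g(0)=1$, so claim~1 follows from continuity: as $\theta_d\to 0$ we get $\cos\theta_{h_2}\to 1$, hence $\theta_{h_2}\to 0$; a quantitative version follows by a first-order Taylor expansion of $g$ near $0$ (one checks $g(\theta)=1-\tfrac12\theta^2+O(\theta^3)$, so $\theta_{h_2}\approx\theta_d$ to leading order), though for the stated qualitative claim continuity alone suffices.

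For claim~2, the key step is to show that $\theta\mapsto\theta_{h_2}$ is nondecreasing on $[0,\pi/2]$ and maps this interval into $[0,\pi/2]$; since $\arccos$ is decreasing, it is equivalent to show $g$ is \emph{decreasing} on $[0,\pi/2]$ and that $g(\theta)\ge 0=\cos(\pi/2)$ there. I would compute
\begin{equation}\label{eq:gprime}
   g'(\theta)=-\frac{1}{\pi}\cos\theta-\frac{\pi-\theta}{\pi}\sin\theta+\frac{1}{2\pi}\cos\theta=-\frac{1}{2\pi}\cos\theta-\frac{\pi-\theta}{\pi}\sin\theta,
\end{equation}
and note that for $\theta\in[0,\pi/2]$ we have $\cos\theta\ge 0$, $\sin\theta\ge 0$, and $\pi-\theta>0$, so $g'(\theta)\le 0$, with strict inequality on $(0,\pi/2]$. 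Hence $g$ is decreasing, so $0\le\theta_d(\rvx,\rvz)\le\theta_d(\rvx,\rvz')\le\pi/2$ implies $g(\theta_d(\rvx,\rvz))\ge g(\theta_d(\rvx,\rvz'))$, i.e. $\cos\theta_{h_2}(\rvx,\rvz)\ge\cos\theta_{h_2}(\rvx,\rvz')$, which gives $\theta_{h_2}(\rvx,\rvz)\le\theta_{h_2}(\rvx,\rvz')$ once we know both angles lie in $[0,\pi]$ where $\arccos$ is monotone. Finally, to place the images in $[0,\pi/2]$, I would check $g(\pi/2)=\frac{1}{2\pi}\ge 0$; combined with $g$ decreasing and $g(0)=1$, this gives $g(\theta)\in[\tfrac{1}{2\pi},1]\subset[0,1]$ for $\theta\in[0,\pi/2]$, so $\cos\theta_{h_2}\ge 0$, forcing $\theta_{h_2}\in[0,\pi/2]$.

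I do not anticipate a genuine obstacle here; the only mild subtlety is bookkeeping the direction-reversing effect of $\arccos$ and making sure the output angles stay in the range where $\arccos$ is the correct inverse (they do, by the $g(\pi/2)\ge 0$ computation). If one wanted the stronger conclusion that $\theta_{h_2}\le\theta_d$ pointwise on $[0,\pi/2]$ (which the remark about model derivatives suggests is morally true and matches Figure~\ref{fig:angle_curves}), that would require showing $g(\theta)\ge\cos\theta$ on $[0,\pi/2]$, equivalently $\frac{1}{2\pi}\sin\theta\ge\frac{\theta}{\pi}\cos\theta$, i.e. $\tan\theta\ge 2\theta$ — which is \emph{false} near $0$ (there $\tan\theta\approx\theta<2\theta$), so in fact $\theta_{h_2}>\theta_d$ for small $\theta$; thus the corollary as stated (only monotonicity and the $\theta_d\ll1\Rightarrow\theta_h\ll1$ implication) is the right level of generality and no such pointwise domination should be attempted.
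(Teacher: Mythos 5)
Your proposal is correct and follows essentially the same route as the paper's proof: the linear case is dispatched by the identity $\theta_{h_1}=\theta_d$, claim 2 comes from the same derivative computation $g'(\theta)=-\frac{1}{2\pi}\cos\theta-\frac{\pi-\theta}{\pi}\sin\theta\le 0$ on $[0,\pi/2]$ together with positivity of $g$ there, and claim 1 follows from the behaviour of $g$ near $0$ (the paper uses a Taylor expansion where you invoke continuity; both suffice). One small factual slip in your aside: the expansion is $g(\theta)=1-\frac{\theta}{2\pi}-\frac{\theta^2}{2}+o(\theta^2)$, not $1-\frac{\theta^2}{2}+O(\theta^3)$, so $\theta_{h_2}\sim\sqrt{\theta_d/\pi}$ rather than $\theta_{h_2}\approx\theta_d$ to leading order --- which is in fact consistent with your own (correct) closing observation that $\theta_{h_2}>\theta_d$ for small $\theta_d$; since you explicitly do not rely on that expansion, the proof stands.
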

\begin{proof}
For the infinitely wide two-layer linear network $h_1$, we have seen in Theorem \ref{thm:angle} that 
\begin{equation}
\theta_h(\rvx,\rvz) = \theta_d(\rvx,\rvz), ~~ \theta_h(\rvx,\rvz') = \theta_d(\rvx,\rvz').
\end{equation}
Hence, the corollary is trivial for $h_1$.

Below, we consider the infinitely wide two-layer ReLU network $h_2$.
By Theorem \ref{thm:angle}, we have the connection
\begin{equation}\label{eq:relation_in_pf}
        \cos\theta_h(\rvx,\rvz)  = \frac{\pi-\theta_d(\rvx,\rvz)}{\pi}\cos\theta_d(\rvx,\rvz)+\frac{1}{2\pi}\sin \theta_d(\rvx,\rvz).
\end{equation}
If $\theta_d(\rvx,\rvz) \ll 1$, then using Taylor expansion on the right hand side, we have 
\begin{equation*}
    \cos\theta_h(\rvx,\rvz) = 1- \frac{1}{2\pi}\theta_d(\rvx,\rvz) -\frac{1}{2}\theta_d^2(\rvx,\rvz) + o\left(\theta_d^2(\rvx,\rvz)\right).
\end{equation*}
Hence, $1-\cos\theta_h(\rvx,\rvz) \ll 1$, which implies $\theta_h(\rvx,\rvz)\ll 1$. We conclude the first statement.

For the second statement of the corollary, note that when $\theta_d$ is in $[0,\pi/2]$ the right hand side (R.H.S.) of Eq.(\ref{eq:relation_in_pf}) is always positive, resulting in $\theta_h\in [0,\pi/2]$.
For the rest of the statement, it suffices to prove the monotonicity of the relation in $[0,\pi/2]$. This is done by having the monotonicity of the functions of $\arccos(\cdot)$ and R.H.S.. To see the latter, we write
\begin{equation}
\frac{d(\mathrm{R.H.S.})}{d\theta_d} = -\frac{1}{2\pi}\cos\theta_d - \left(1-\frac{\theta_d}{\pi}\right)\sin \theta_d,
\end{equation}
which is always non-positive in $[0,\pi/2]$. Hence, we are done with the second statement.
\end{proof}

\subsection{Proof of Lemma \ref{lemma:rand_m}}
\label{sec:app_pf_lemma}
\begin{proof}
We denote $A_{ij}$ as the $(i,j)$-th entry of the matrix $A$. Therefore,  $(A^TA)_{ij}=\sum_{k=1}^m A_{ki}A_{kj}$. 
First we find the mean of each $(A^TA)_{ij}$. Since $A_{ij}$ are i.i.d. and has zero mean, we can easily see that for any index $k$,
\begin{align*}
    \mathbb{E}[A_{ki}A_{kj}] =\begin{cases}
                                1,& \text{if } i=j\\
                                0,              & \text{otherwise}
                                \end{cases}.
\end{align*}
Consequently,
\begin{align*}
    \mathbb{E}[(\frac{1}{m}A^TA)_{ij}] =\begin{cases}
                                1,& \text{if } i=j\\
                                0,              & \text{otherwise}
                                \end{cases}.
\end{align*}
That is $\mathbb{E}[\frac{1}{m}A^TA]=I_d$. 

Now we consider the variance of each $(A^TA)_{ij}$. If $i\neq j$ we can explicitly write,
\begin{align*}
Var\left[\frac{1}{m}(A^TA)_{ij}\right] &= \frac{1}{m^2}\cdot \mathbb{E}\left[\sum_{k_1=1}^m\sum_{k_2=1}^m A_{k_1i}A_{k_1j}A_{k_2i}A_{k_2j}\right]\\
&=\frac{1}{m^2}\cdot \sum_{k_1=1}^m\sum_{k_2=1}^m \mathbb{E}\left[A_{k_1i}A_{k_1j}A_{k_2i}A_{k_2j}\right]\\
&= \frac{1}{m^2}\left(\sum_{k=1}^m \mathbb{E}\left[A_{ki}^2A_{kj}^2\right] + \sum_{k_1 \ne k_2}\mathbb{E}\left[A_{k_1i}A_{k_1j}A_{k_2i}A_{k_2j}\right]\right) \\
&=  \frac{1}{m^2}\left(\sum_{k=1}^m \mathbb{E}\left[A_{ki}^2\right]\mathbb{E}\left[A_{kj}^2\right] + \sum_{k_1 \ne k_2}\mathbb{E}[A_{k_1i}]\mathbb{E}[A_{k_1j}]\mathbb{E}[A_{k_2i}]\mathbb{E}[A_{k_2j}]\right)\\
&=\frac{1}{m^2}\cdot (m + 0) =  \frac{1}{m}.
\end{align*}

In the case of $i=j$, then,
\begin{align}
Var\left[\frac{1}{m}(A^TA)_{ii}\right] &= \frac{1}{m^2}\cdot Var\left[\sum_{k=1}^m A_{ki}^2\right] = \frac{1}{m^2}\cdot \sum_{k=1}^m Var\left[ A_{ki}^2\right] \overset{(a)}{=} \frac{1}{m^2}(m\cdot2) = \frac{2}{m}.
\end{align}
In the equality (a) above, we used the fact that $A_{ki}^2\sim \chi^2(1)$. Therefore, $\lim_{m \to \infty} Var(\frac{1}{m}(A^TA))=0$.

Now applying Chebyshev's inequality we get,
\begin{align}
Pr(|\frac{1}{m}A^TA-I_d|\geq\epsilon)\leq \frac{Var(\frac{1}{m}(A^TA))}{\epsilon}
\end{align}
Obviously for any $\epsilon\geq0$ as $m\rightarrow\infty$, the R.H.S. goes to zero. Thus, $\frac{1}{m}A^TA \to I_{d\times d}, ~~ \textrm{in  probability.}$
\end{proof}
\subsection{Proof of Theorem \ref{coro:segment_loss}}\label{secapp:pf_segment_loss}

\begin{proof}
First, note that the clean data has the same distribution as the noiseless  (ground-truth-labelled) data. Hence, $L(\rvw;\mathcal{D}_{clean}) = L(\rvw;\hat{\mathcal{D}})$. By Proposition \ref{thm:update_rule}, the gradient descent minimizes $ L(\rvw;\hat{\mathcal{D}})$, as long as the learning rate $\eta$ is small enough to avoid over-shooting. Therefore, it is straightforward to get that the gradient descent also decreases the clean subset loss $L(\rvw;\mathcal{D}_{clean})$.

Let's consider the noisy subset $\mathcal{D}_{noise}$. Combining Assumption \ref{assum:direction_segment} and Eqs.(\ref{eq:g_corr_epsilon}) and (\ref{eq:full_grad_g_corr}), we get
\begin{equation}
    \nabla L(\rvw_t;\hat{\mathcal{D}}) = -\frac{1-\alpha_t}{\alpha_t}\sum_c g^{(c)}_{noise}(\rvw_t) = -\frac{1-\alpha_t}{\alpha_t} \nabla L(\rvw_t,\mathcal{D}_{noise}).
\end{equation}
We note that the factor $-\frac{1-\alpha_t}{\alpha_t}$ is negative, indicating that the gradient descent update, Eq.(\ref{eq:thm_update_rule}), is in opposite direction of minimizing the noisy subset $L(\rvw_t,\mathcal{D}_{noise})$. Hence, we get that 
$L(\rvw_{t+1},\mathcal{D}_{noise}) > L(\rvw_t,\mathcal{D}_{noise})$.
\end{proof}

\subsection{Proof of Theorem \ref{thm:diminishing_dominance}}\label{secapp:pf_diminishing_dominance}
\begin{proof}
    First note that an infinitely wide feedforward neural network (before the activation function on output layer) is linear in its parameters, and can be written as \cite{liu2020linearity, zhu2022transition}:
    \begin{equation}
        h(\rvw;\rvx) = h(\rvw_0;\rvx) + \nabla h(\rvw_0;\rvx)^T (\rvw-\rvw_0),
    \end{equation}
    where $\nabla h(\rvw_0;\rvx)$ is constant during training. As is known, the logistic regression loss (for an arbitrary $\mathcal{S}$) on a linear model
    \begin{equation}
        L(\rvw;\mathcal{S}) = \sum_{(\rvx,y)\in\mathcal{S}} - y \log f(\rvw;\rvx) - (1-y) \log (1-f(\rvw;\rvx)),
    \end{equation}
     is a convex function with respect to the parameters $\rvw$, where $f(\rvw;\rvx) = sigmoid (h(\rvw;\rvx))= 1/(1+\exp(-h(\rvw;\rvx)))$. Hence, at any point $\rvw$ we have the Hessian matrix $H(\rvw;\mathcal{S})$ of the logistic regression loss $L(\rvw;\mathcal{S})$ is positive definite.

Now, consider the point $\rvw_{t+1} = \rvw_t - \eta L(\rvw_t;\mathcal{D})$ with a sufficiently small step size $\eta$. Using Assumption \ref{assum:direction_segment} and Eq.(\ref{eq:full_grad_g_corr}), we can also write $\rvw_{t+1}$ as
\begin{align*}
    & \rvw_{t+1} = \rvw_t - \eta (1-\alpha_t) L(\rvw_t;\mathcal{D}_{clean}), ~ \mathrm{or}\\
    & \rvw_{t+1} = \rvw_t + \eta \frac{1-\alpha_t}{\alpha_t}L(\rvw_t;\mathcal{D}_{noise}).
\end{align*}
For $\mathcal{D}_{clean}$, we have
\begin{equation}
    \nabla L(\rvw_{t+1};\mathcal{D}_{clean}) = \nabla L(\rvw_t;\mathcal{D}_{clean}) + H(\xi;\mathcal{D}_{clean})(\rvw_{t+1}-\rvw_t),
\end{equation}
with $\xi$ being some point between $\rvw_t$ and $\rvw_{t+1}$. Then,
\begin{align*}
    \|\nabla L(\rvw_{t+1};\mathcal{D}_{clean})\|^2 = \|\nabla L(\rvw_t;\mathcal{D}_{clean})\|^2 - 2 \eta (1-\alpha_t) \nabla L(\rvw_t;\mathcal{D}_{clean})^T H(\xi;\mathcal{D}_{clean})\nabla L(\rvw_t;\mathcal{D}_{clean}) + O(\eta^2).
\end{align*}
By the convexity of the loss function (i.e., the positive definiteness of Hessian $H$), we easily get $$\|\nabla L(\rvw_{t+1};\mathcal{D}_{clean})\|^2 < \|\nabla L(\rvw_t;\mathcal{D}_{clean})\|^2.$$

Similarly for $\mathcal{D}_{noise}$, 
\begin{align*}
    \|\nabla L(\rvw_{t+1};\mathcal{D}_{noise})\|^2 = \|\nabla L(\rvw_t;\mathcal{D}_{noise})\|^2 + 2 \eta \frac{1-\alpha_t}{\alpha_t} \nabla L(\rvw_t;\mathcal{D}_{noise})^T H(\xi';\mathcal{D}_{noise})\nabla L(\rvw_t;\mathcal{D}_{noise}) + O(\eta^2).
\end{align*}
Hence, for small $\eta$, we get $$L(\rvw_{t+1};\mathcal{D}_{noise})\|^2 > \|\nabla L(\rvw_t;\mathcal{D}_{noise})\|^2.$$

Noting that
\[
\nabla L(\rvw_t;\mathcal{D}_{noise}) = \sum_c g_{noise}^{(c)}(\rvw_t) = -\alpha_t \sum_c g_{clean}^{(c)}(\rvw_t) = -\alpha_t \nabla L(\rvw_t;\mathcal{D}_{clean}),
\] 
 we obtain
\begin{equation}
    \alpha_{t+1} > \alpha_t.
\end{equation}
Therefore, we conclude the proof of the theorem.
\end{proof}
The high-level idea of the above proof is that: (locally) decreasing a convex function $L$ along the opposite gradient direction, $-\nabla L$, results in shrinking the magnitude of the gradient; (locally) increasing a convex function $L$ along the gradient direction $\nabla L$ results in magnifying the gradient magnitude. 

\section{Experimental setup details}
\label{sec:app_exp_setup}

\paragraph{Binary classification on two class of MNIST.}  We extract two classes, the images with digits ``7'' and ``9'', out from the MNIST datasets, and injected $30\%$ random label noise into each class in the training dataset (i.e., labels of $30\%$ randomly selected samples are flipped to the other class), leaving test set intact. We employ a fully connected neural network with 2 hidden layers, each containing 512 units and using the ReLU activation function, of the classification task. We use mini-batch SGD with batch size 256 to train this network.

Figure \ref{fig:mnist2_3plots}, Figure \ref{fig:mnist2_ratio} and right panel of Figure \ref{fig:histogram_at-initialization} are based on the above setting.

\paragraph{Multi-class classification on MNIST.} We use the following CNN to classify the $10$ classes of MNIST. Specifically, this CNN contains two consecutive convolutional layers, with $32$ and $64$ channels, respectively. Both convolutional layers uses $3\times 3$ kernel size and are with stride $1$. On top of the convolutional layers, there is one max pooling layer, followed by two fully connected layers with width 64 and 10, respectively. 

We injected $30\%$ random label noise into each class of MNIST training set.
We use mini-batch SGD with batch size 512 to training the neural network.

Top row of Figure \ref{fig:multi-class}, left panel of Figure \ref{fig:mnist10_ratio}, and Figure \ref{fig:1vsrest} are based on this setting.

\paragraph{Multi-class classification on CIFAR-10.} For the CIFAR-10 dataset, we use a standard $9$-layer ResNet (ResNet-9) to classify
\footnote{For the detailed architecture, we use the implementation in \url{https://github.com/cbenitez81/Resnet9/blob/main/model_rn.py}.}. 
We injected $40\%$ random label noise into each class of CIFAR-10 training set.
We use mini-batch SGD with batch size 512 to training the neural network.

Bottom row of Figure \ref{fig:multi-class} and right panel of Figure \ref{fig:mnist10_ratio} are based on this setting.

\section{Additional experimental results}
\label{sec:app_verification}
\subsection{Angle $\theta_g$ distribution for multi-class classification}
We experimentally verify the directional distributions of single-logit sample-wise gradients on MNIST dataset. We use the same CNN as in Figure \ref{fig:multi-class}, and evaluate the angle $\theta_g$ distributions at the network initialization. Specifically, given $c\in \{0,1,\cdots,9\}$, we consider the $c$-th output logit. Note that, according to the one-hot encoding, only class $c$ has label $1$ and all the rest classes have label $0$ on this logit. Hence, the binary classifier at logit $c$ is essentially a one-versus-rest classifier. For each of these binary classifiers, we look at the angle $\theta_g$ distributions of the corresponding single-logit sample-wise gradients.

As shown in Figure \ref{fig:1vsrest}, each sub-plot corresponds to one  logit. We can see that, for each $c$:
\begin{itemize}
    \item The clean subset of  class $c$ (green) has its single-logit sample-wise gradients concentrated at small angles $\theta_g$.
    \item The noisy subset of class $c$ (red) has its single-logit sample-wise gradients in the opposite direction of clean ones, concentrating at large angles and being symmetric to the clean subset. The noisy subset gradient $g^{(c)}_{noisy}(\rvw_0)$ (red dash line) is sharply opposite to $g^{(c)}_{clean}(\rvw_0)$, with $\theta_g$ almost $180^{\circ}$.
    \item The distribution of ``other'' subset (blue), which contains the clean samples of all other classes, is clearly separated from the class $c$ distributions. Moreover,  the component of subset gradient $g^{(c)}_{other}(\rvw_0)$ that is orthogonal to $g^{(c)}_{clean}(\rvw_0)$ clearly has non-trivial magnitude (as the $\sin\theta_g \sim \Theta(1)$).
\end{itemize}

All the above observation are align with our analysis for binary classifiers in Section \ref{sec:intialization} (compare with Figure \ref{fig:histogram_at-initialization} for example).

\begin{figure}[h]
        \centering        
        \includegraphics[width=0.85\textwidth]
        {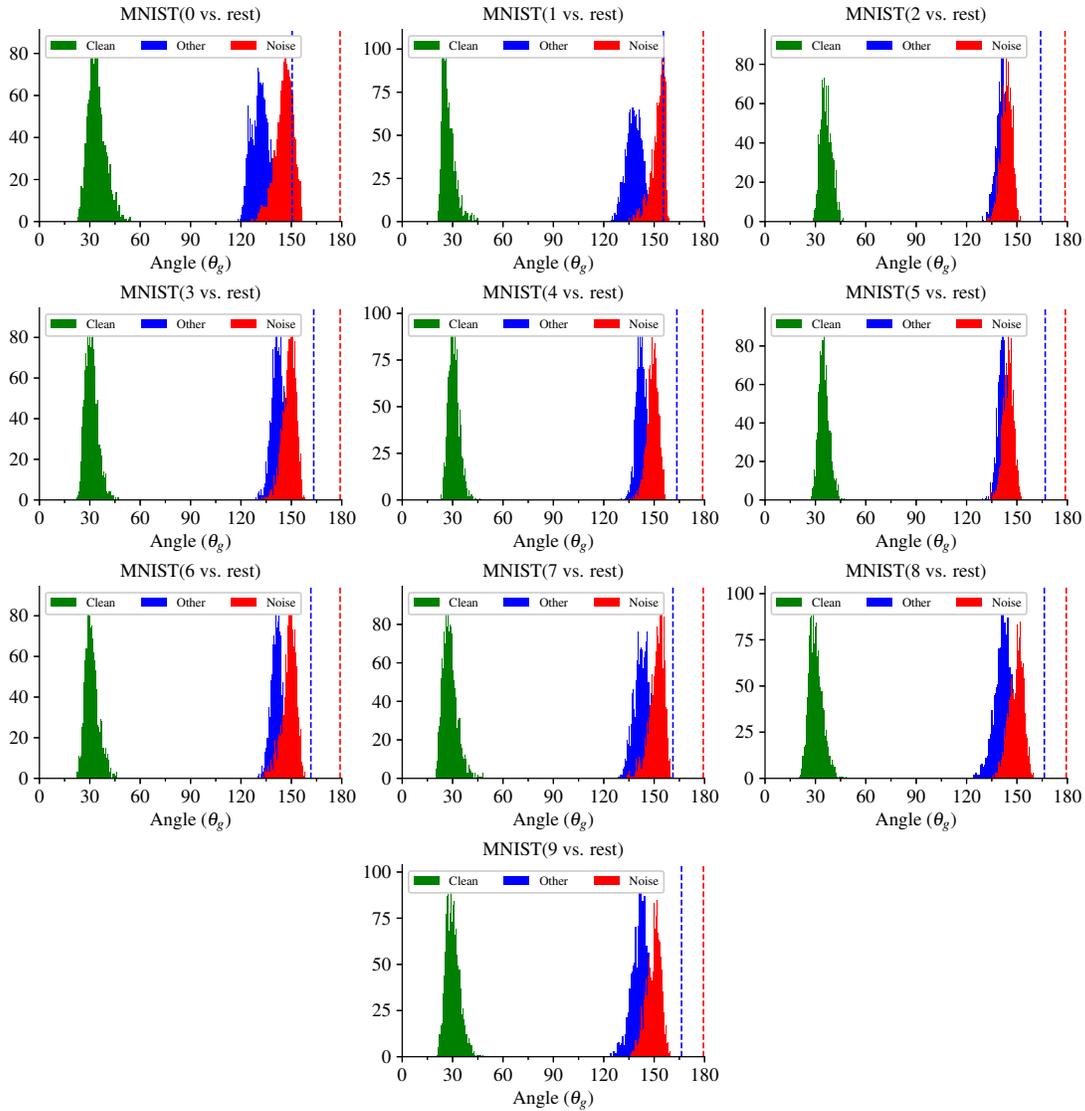}%
        \caption{ The distributions of $\theta_g$ for single-logit sample-wise gradients. MNIST dataset (label noise $\delta=0.3$) on CNN. Dash lines represent subset gradients. 
        }
\vspace{-10pt}
    \label{fig:1vsrest}
\end{figure}

\begin{figure*}[h]
        \centering        
        \includegraphics[width=1.0\columnwidth]
        {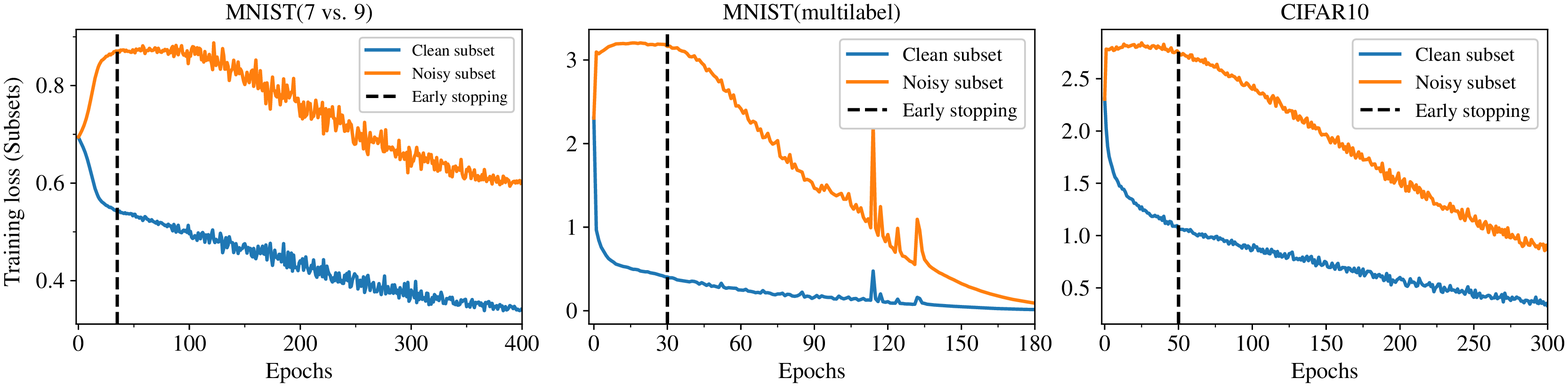}%
        \vspace{-10pt}
        \caption{ Training losses on clean and noisy subsets. {\bf Left}: for binary classification on two class MNIST (``7'' and ``9'', noise level  $\delta=0.4$). {\bf Middle}: for multi-class classification on MNIST (noise level  $\delta=0.3$). {\bf Right}: for multi-class classification on CIFAR-10 (noise level  $\delta=0.4$). 
        }
        \vspace{-10pt}
    \label{fig:loss_flip_clean}
\end{figure*}

\subsection{Subset loss dynamics}
Here, we show the dynamics for the subset losses, i.e., clean subset loss $L(\rvw;\mathcal{D}_{clean})$ and noisy subset loss $L(\rvw;\mathcal{D}_{noisy})$. Figure \ref{fig:loss_flip_clean} shows the curves of these subset losses under different experimental settings: binary classification (same setting as in Figure \ref{fig:mnist2_3plots}); multi-classification for MNIST dataset (same setting as in top row of Figure \ref{fig:multi-class}); and multi-classification for CIFAR-10 dataset (same setting as in bottom row of Figure \ref{fig:multi-class}).

Obviously, under each experimental setting,  the noisy subset loss $L(\rvw;\mathcal{D}_{noisy})$ becomes worse (increases) in the early stage and decreases in the later stage, which is align with the  clean-priority learning dynamics.

\subsection{Effect of width}

We extract two classes, the images with digits ``7'' and ``9'', out from the MNIST datasets, and injected $40\%$ random label noise into each class in the training dataset (i.e., labels of $40\%$ randomly selected samples are flipped to the other class), leaving test set intact. We employ a fully connected neural network with 2 hidden layers. We sweep the number of neuran per layer from 32 to 2048 with ReLU activation function, of the classification task. We use mini-batch SGD with batch size 256 to train this network.
It can be see in Figure \ref{fig:width} that the clean-priority learning is consistent with all widths.

\begin{figure*}[h]
        \centering        
        \includegraphics[width=1.0\columnwidth]
        {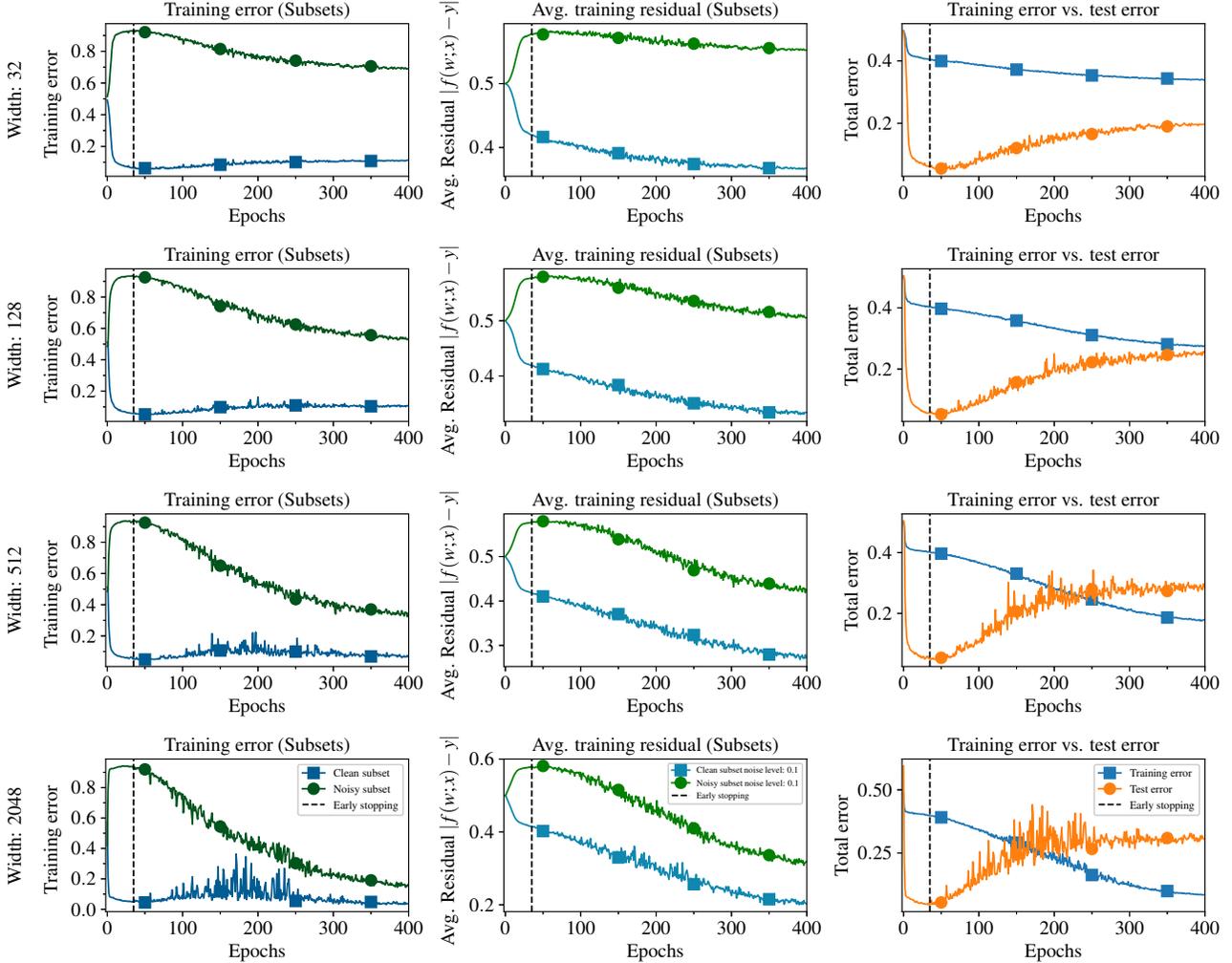}%
        \vspace{-10pt}
        \caption{ Learning dynamics on two classes (``7'' and ``9'') of MNIST (noise level $\delta=0.4$) with FC networks with different widths. {\bf Left}: in the early stage (before the vertical dash line), clean subset error decreases, while  noisy subset error increases. {\bf Middle}: In the early stage, the clean subset average residual $\mathbb{E}_{(\rvx,y)\in \mathcal{D}_{clean}}[|f(\rvw;\rvx)-y|]$ decreases, i.e., on average the network outputs of clean subset move towards the labels, indicating a ``learning'' on the clean subset. One the other hand, the noisy subset average residual, $\mathbb{E}_{(\rvx,y)\in \mathcal{D}_{noise}}[|f(\rvw;\rvx)-y|]$, monotonically increases, indicating that the noisy subset is not-learned. {\bf Right}: total test error and total training error. 
        }
        \vspace{-10pt}d
    \label{fig:width}
\end{figure*}

\subsection{Effect of noise level}

We use the following CNN to classify the $10$ classes of MNIST. Specifically, this CNN contains two consecutive convolutional layers, with $32$ and $64$ channels, respectively. Both convolutional layers uses $3\times 3$ kernel size and are with stride $1$. On top of the convolutional layers, there is one max pooling layer, followed by two fully connected layers with width 64 and 10, respectively. 

We injected different level of random label noise from 0.1 to 0.4 into each class of MNIST training set.
We use mini-batch SGD with batch size 512 to training the neural network.
Figure \ref{fig:noise} shows that clean priority is consitent for all noise levels.

\begin{figure*}[h]
        \centering        
        \includegraphics[width=1.0\columnwidth]
        {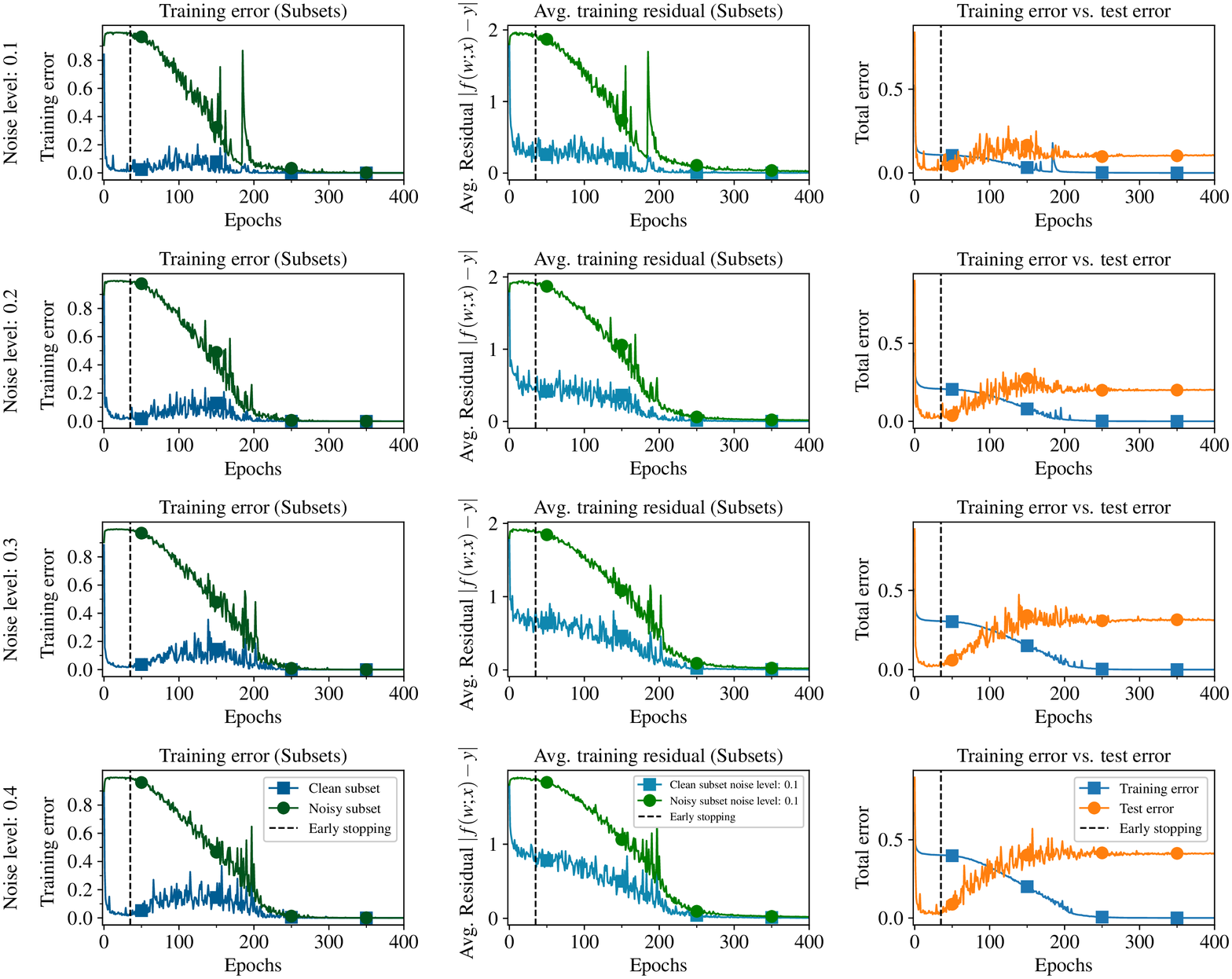}%
        \vspace{-10pt}
        \caption{ Learning dynamics on multi-class classification. {\bf Left}: in the early stage (before the vertical dash line), clean subset error decreases, while  noisy subset error increases. {\bf Middle}: In the early stage, the clean subset average residual $\mathbb{E}_{(\rvx,y)\in \mathcal{D}_{clean}}[\|f(\rvw;\rvx)-y\|]$ decreases, i.e., on average the network outputs of clean subset move towards the labels, indicating a ``learning'' on the clean subset. One the other hand, the noisy subset average residual, $\mathbb{E}_{(\rvx,y)\in \mathcal{D}_{noise}}[\|f(\rvw;\rvx)-y\|]$, monotonically increases, indicating that the noisy subset is not-learned. {\bf Right}: total test error and total training error. See subset loss curves in Appendix \ref{sec:app_verification}.
        }
        \vspace{-10pt}
    \label{fig:noise}
\end{figure*}


\end{document}